%%%%%%%%%%%%%%%%%%%%%%%%%%%%%%%%%%%%%%%%%%%%%%%%%%%%%%%%%%%%%%%%%%%%%%%%%%%%%%%%
%2345678901234567890123456789012345678901234567890123456789012345678901234567890
%        1         2         3         4         5         6         7         8
\documentclass[letterpaper, 10 pt, conference]{ieeeconf}  % Comment this line out if you need a4paper

\IEEEoverridecommandlockouts                              % This command is only needed if 
                                                          % you want to use the \thanks command

\overrideIEEEmargins                                      % Needed to meet printer requirements.

%In case you encounter the following error:
%Error 1010 The PDF file may be corrupt (unable to open PDF file) OR
%Error 1000 An error occurred while parsing a contents stream. Unable to analyze the PDF file.
%This is a known problem with pdfLaTeX conversion filter. The file cannot be opened with acrobat reader
%Please use one of the alternatives below to circumvent this error by uncommenting one or the other
%\pdfobjcompresslevel=0
%\pdfminorversion=4

% See the \addtolength command later in the file to balance the column lengths
% on the last page of the document

% The following packages can be found on http:\\www.ctan.org
\usepackage{graphicx} % for pdf, bitmapped graphics files
\usepackage{amsmath} % assumes amsmath package installed
\usepackage{amssymb}  % assumes amsmath package installed
\usepackage{xcolor}
\usepackage{cite}
\usepackage{subcaption}
\usepackage{bm}
\usepackage{hyperref}
\usepackage[ruled,vlined]{algorithm2e}
\usepackage{booktabs}
\usepackage{caption}
\usepackage{cite}
\hyphenation{op-tical net-works semi-conduc-tor}
\newtheorem{theorem}{Theorem}
\newtheorem{lemma}{Lemma}

\title{\LARGE \bf
Chance-Constrained Motion Planning using Modeled Distance-to-Collision Functions
}

\author{Jacob J. Johnson$^{1}$, \textit{IEEE Student Member} and Michael C. Yip$^{1}$, \textit{IEEE Senior Member}% <-this % stops a space
% \thanks{*This work was not supported by any organization}% <-this % stops a space
\thanks{$^{1}$J. J. Johnson and M. C. Yip are with the Department of Electrical and Computer Engineering,
        University of California San Diego
        {\tt\small{\{jjj025, yip\} @ucsd.edu}}}%

}

\begin{document}

\maketitle
\thispagestyle{empty}
\pagestyle{empty}

%%%%%%%%%%%%%%%%%%%%%%%%%%%%%%%%%%%%%%%%%%%%%%%%%%%%%%%%%%%%%%%%%%%%%%%%%%%%%%%%
\begin{abstract}
This paper introduces Chance Constrained Gaussian Process-Motion Planning (CCGP-MP), a motion planning algorithm for robotic systems under motion and state estimate uncertainties. The paper's key idea is to capture the variations in the distance-to-collision measurements caused by the uncertainty in state estimation techniques using a Gaussian Process (GP) model. We formulate the planning problem as a chance constraint problem and propose a deterministic constraint that uses the modeled distance function to verify the chance-constraints. We apply Simplicial Homology Global Optimization (SHGO) approach to find the global minimum of the deterministic constraint function along the trajectory and use the minimum value to verify the chance-constraints. Under this formulation, we can show that the optimization function is smooth under certain conditions and that SHGO converges to the global minimum. Therefore, CCGP-MP will always guarantee that all points on a planned trajectory satisfy the given chance-constraints. The experiments in this paper show that CCGP-MP can generate paths that reduce collisions and meet optimality criteria under motion and state uncertainties. The implementation of our robot models and path planning algorithm can be found on GitHub\footnote{\href{https://github.com/jacobjj/gp_prob_planning}{https://github.com/jacobjj/gp\_prob\_planning}}.
\end{abstract}

%%%%%%%%%%%%%%%%%%%%%%%%%%%%%%%%%%%%%%%%%%%%%%%%%%%%%%%%%%%%%%%%%%%%%%%%%%%%%%%%
\section{INTRODUCTION}
\noindent In the past few decades, a profusion of work has focused on the motion planning problem for an assortment of tasks such as car navigation around obstacles~\cite{8206458, johnson2020dynamically,li2021mpcmpnet}, constrained robotic manipulation~\cite{4399305, qureshi2020constrained}, and surgical robot automation~\cite{8977357}. However, most motion planning research has focused on demonstrating examples where environments are highly structured, and uncertainties in sensing are overlooked. In reality, robots in the real world will face different sources of uncertainties: 1. errors in system model and sensor measurements, 2. ambiguity in the position of obstacles in the space, and 3. varying physical properties of the environment itself. Motion planning algorithms that consider the collision probability, i.e., \textit{chance constraints}~\cite{4739221}, perform better than previous methods in such unstructured environments~\cite{CCOPP, CCOPP2}.

\begin{figure}
    \centering
    \includegraphics[width=\columnwidth]{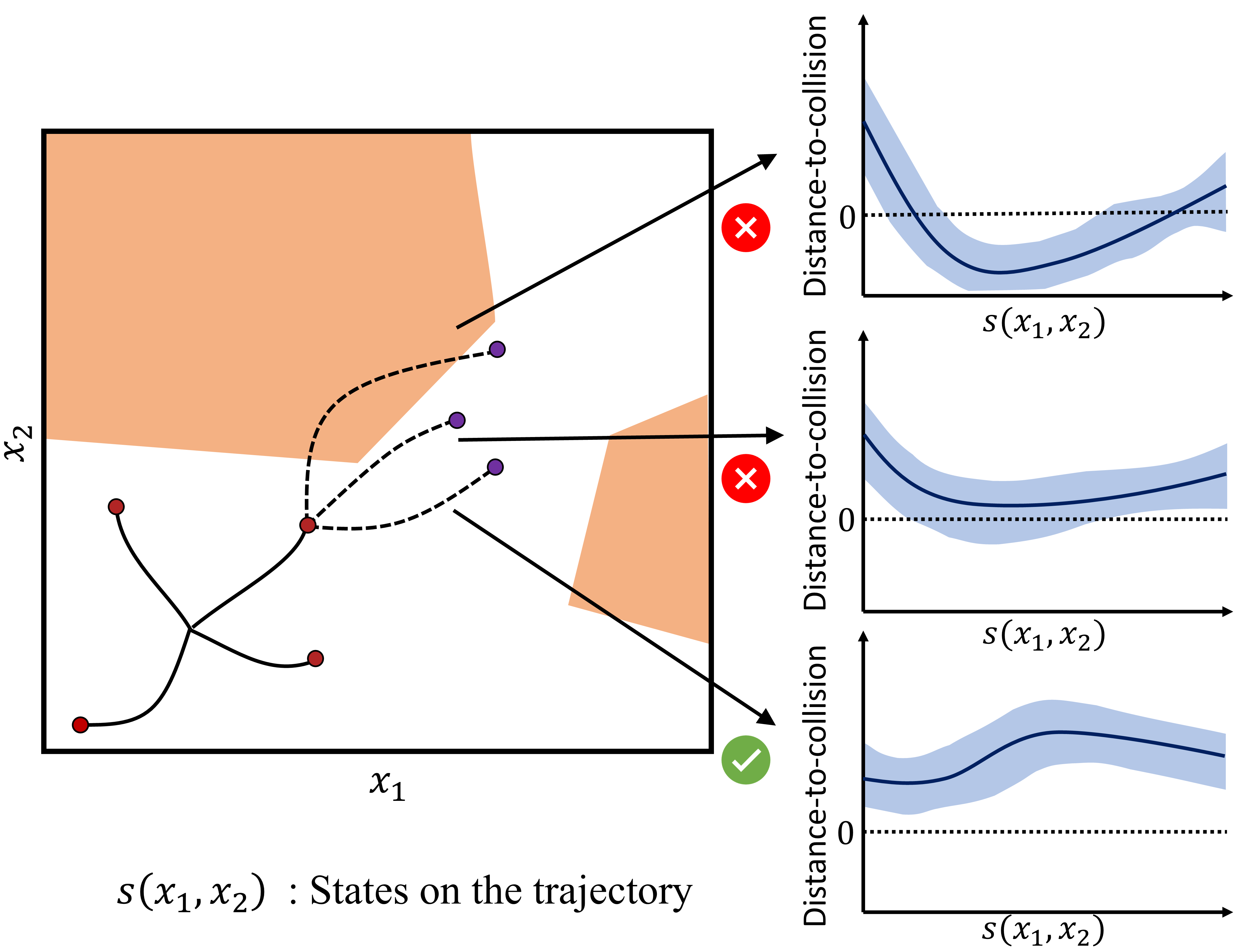}
    \caption{CCGP-MP is a motion planning algorithm for robotic systems under motion and sensor uncertainty which uses a Gaussian Process to model the variations in distance-to-collision. The model verifies user-defined chance constraints for trajectory segments in sampling-based planners.}
    \label{fig:concept_fig}
    \vspace{-1.8em}
\end{figure}

Previous works on planning under uncertainty using chance constraints have not guaranteed that states along a given trajectory are collision-free but rather verify that discrete states satisfy the chance constraints. Furthermore, they bound the obstacles and the robot to make the optimization tractable, making the probabilistic estimates overly conservative, leading to winding trajectories. Finally, estimating collision probabilities utilizing Monte Carlo methods is computationally expensive since the shortest distance from the robot to the obstacle, i.e., \textit{distance-to-collision}, has to be evaluated for a large number of samples.
%Finally, finding optimal trajectories with chance constraints typically requires evaluating distance to collision checks for subsampling points between waypoints in a trajectory, and those evaluations can be especially computationally burdensome depending on how finely one subsamples the path.

Ultimately, our goal is to find optimal trajectories that continuously meet chance constraints along an entire trajectory. To this end, we propose the Chance Constrained Gaussian Process-Motion Planning algorithm (CCGP-MP) that addresses those issues. We use a Gaussian Process (GP) to model the distribution of distance-to-collision measures for noisy robotic systems. In turn, we integrate this model with traditional sampling-based planners to generate trajectories that satisfy a given chance constraint. Our formulation ensures both the sampled states in a path satisfy the chance-constraint as well as all the points that lie along the edges connecting the sampled states. Thus, the main contributions of this paper are: 
\begin{enumerate}
    \item Propose a GP model to capture the continuous, probabilistic distribution of functions describing distance-to-collision for a stochastic system
    \item Employ a global optimization technique to verify collision constraints along a given path without discretizing the states along the path.
    \item Generate low-risk paths for systems with motion and sensor noise using sampling-based planners.
\end{enumerate}

\section{RELATED WORKS}
\noindent Many existing planning methods are based on using collision probability for planning under uncertainty~\cite{ lqg_mp, CCOPP, CCOPP2, 8613928, MCMP,MCMP_v2}, while other solutions rely on Markov Decision Process (MDP)~\cite{6280115} or Partially Observable MDPs (POMDPs)~\cite{doi:10.1177/0278364912456319}. MDP and POMDP often need discretization of the state space, and solving an MDP can quickly become computationally intractable for continuous planning domains.
%A few papers have also looked into reducing the variance of robot states for a given trajectory~\cite{7989080, 6225177, Patil2015}. 
In the following section, we will review a few of the works on estimating collision probability for planning and recent techniques used for distance estimation.

In~\cite{CCOPP}, the authors find a path by formulating the planning problem as an optimization problem where the planned states have to satisfy user-defined chance-constraints while minimizing a cost.  Further development of this algorithm~\cite{CCOPP2} ensured that inter-node trajectories satisfied the chance-constraint but only for obstacles represented as linear functions.  In~\cite{8613928}, the authors propose tighter bounds over ellipsoidal obstacles. For these methods, the number of constraints to solve increases linearly with the number of obstacles, and for higher dimensions, the number of constraints grows exponentially. Using a GP model to capture the distance-to-collision function, we avoid the need to convexify the environment and robot, and irrespective of the environment's complexity, a single equation represents the chance constraints.
% In~\cite{CCOPP}, the authors propose a solution to the path planning problem under motion uncertainty by minimizing an objective function with states satisfying user-defined chance-constraints. The paper finds the optimal path by modeling the environment's obstacles as linear functions and solves the constraint optimization problem using mixed integer programming. Further development of this algorithm~\cite{CCOPP2} ensured that inter-node trajectories satisfied the chance-constraint but only for obstacles represented as linear functions. In~\cite{8613928}, the authors propose tighter bounds over ellipsoidal obstacles. The authors bound the obstacles and robot by a spherical region rather than a rigid body, making it non-trivial to adapt to robots with higher degrees of freedom (DoF). For these methods, the number of constraints to solve increases linearly with the number of obstacles, and for higher dimensions, the number of constraints grows exponentially. Using a GP model to capture the distance-to-collision function, we avoid the need to convexify the environment and robot, and irrespective of the environment's complexity, a single equation represents the chance constraints.

Another class of methods estimates the probability of collision along a path by obtaining the robot states' distribution and choosing one with the least likelihood of a collision. Linear-Quadratic Gaussian Motion Planning (LQG-MP)~\cite{lqg_mp} method derives a distribution for the states along a path associated with using a Linear-Quadratic Gaussian (LQG) controller to stabilize the robot. Although this method can obtain a trajectory that reduces the probability of collision, as suggested in~\cite{RRBT}, there is no guarantee that LQG-MP may find a path because of the finite number of paths generated by RRT. In~\cite{6224727}, the authors propose linear constraints on the distribution of states to obtain a tighter collision probability.~\cite{Sun2016} extends the LQG-MP for higher DOF robots, but like LQG-MP, the method does not have a user-defined chance constraint parameter. 
%~\cite{CCMP-HDR} proposes using Hermite polynomials to approximate the probability of collision for higher DoF robots. 
All these methods guarantee safety for discrete states but are intractable to verify safety for all points on a trajectory.

In~\cite{MCMP, MCMP_v2}, the authors use Monte-Carlo simulations to get a more accurate distribution of states and a more precise collision probability estimate. In~\cite{MCMP}, the authors use importance sampling to be more data-efficient in their simulation and reduces the variance of estimates using control variate.~\cite{MCMP_v2} extends this work to non-linear systems for verification of trajectory in an online planning setting. Although these methods estimate collision probability along a path precisely, they rely on meta planning algorithms to generate an initial path and, as such, cannot incorporate additional optimality criteria into the planning problem. CCGP-MP integrates with optimal planners such as RRT* to solve planning problems with optimality criteria.
%~\cite{MCMP_v2} can check for collision without the need for approximating the robot body as a spherical object for collision checking

%  More recent work uses Koopman operators to estimate the risk of collision for a wheeled robot~\cite{8968340} but relies on motion primitives to evaluate the risk. Another recent work proposed uses the signed distance function to modify the chance-constraints and uses sequential convex programming to derive a path~\cite{9143595}. This method does not check that chance constraints are satisfied for all the states on a trajectory as in previous trajectory optimization techniques. 

Many methods are proposed that use geometric sensors and modeling techniques to estimate the distance-to-collision~\cite{ kew2020neural, zhi2021diffco}, and for a brief review, readers can refer to~\cite{das2020stochastic}, but none considers the measurement models in unstructured environments. Das and Yip~\cite{das2020stochastic} proposed one of the first techniques that included uncertainties to the distance measure model. The authors added Gaussian noise to the distance measure but did not consider the effect of state estimation uncertainty while modeling the distribution.

% \textcolor{red}{Maybe's}
%~\cite{7393870} $\rightarrow$ Explores LQG controller for planning under uncertainty, but finds a path with the maximum distance away from obstacles. No risk/collision estimates used.

\section{CCGP-Motion Planning}
% In this section, first, we define our problem and the assumptions we make, review the noisy robot model, the Gaussian Process model used to retrieve the distance to the nearest obstacle, the chance-constraint formulation for collision checking, and the global optimization method used to verify chance constraints for all points on the trajectory.
\noindent In this section, we define our problem and the assumption we make and introduce the building blocks of CCGP-MP.
% \begin{figure*}[t]
%     \centering
%     \begin{subfigure}[c]{0.33\textwidth}
%     \centering
%     \includegraphics[width=\textwidth]{}
%     \end{subfigure}
%     \hfill
%     \begin{subfigure}[c]{0.64\textwidth}
%     \centering
%     \includegraphics[width=\textwidth]{}
%     \end{subfigure}
%     \caption{A planned trajectory in $\mathbb{X}$ on the left and the corresponding distribution of distance-to-collision given by Equation \ref{eqn:distance_distribution}. The trajectory is parameterized using $s$ where $s=0$ and $s=1$ represent the start and end position, respectively. The shaded region represents twice the standard deviation.} %The GP model captures the uncertainty that the robot might collide for $\alpha\in[0.5, 0.75]$ which matches the real scenario shown in the figure on the left, as the robot moves near the square and circle obstacle}
%     \label{fig:gp_distance_distribution}
%     \vspace{-1.5em}
% \end{figure*}

\subsection{Problem Definition}
\noindent Let the state, control, and observation space be defined as $\mathcal{X}\subseteq\mathbb{R}^{n_x}, \mathcal{U}\subseteq \mathbb{R}^{n_u}$, and $\mathcal{Z}\subseteq \mathbb{R}^{n_z}$ respectively. For a given start position ($x_{start}\in\mathcal{X}$) and goal region ($\mathcal{X}_{goal}\subset \mathcal{X}$ ), the objective is to find a trajectory that satisfies a user-defined collision constraint. A trajectory $\Pi$, defined as a sequence of states $\{x_0, x_1, \ldots, x_N \}$, is considered a solution if $x_0 = x_{start}$, $x_n\in\mathcal{X}_{goal}$, and all the points that connect state $x_i$ and $x_{i+1}$ also satisfy the given collision chance constraint. We assume that for planning the states are sampled in a subspace $\mathbb{X} \subset \mathcal{X}$, where the system's velocities are zero. The problem can be further expanded by enforcing optimality criteria to the sequence of states, such as reducing path length. In the subsequent sections, we detail our solution for this problem. 

\subsection{Motion and Observation Model}
\noindent Throughout this paper, we will suppose that we have a nonlinear dynamics and observation model give by:
\begin{subequations}
\label{eqn:motion_obs_model}
\begin{IEEEeqnarray}{lr}
    x_{t+1} =  f(x_t, u_t) + v(m_t)\quad & m_t\sim\mathcal{N}(0,M)
    \label{eqn:motion_model}
    \\
    z_{t+1} =  h(x_t) + n_t & n_t\sim \mathcal{N}(0, N)
\end{IEEEeqnarray}
\end{subequations}
where $x_t,x_{t+1}\in\mathcal{X}$, $u_t\in\mathcal{U}$, $z_t\in\mathcal{Z}$, $m_t$ and $n_t$ are the process noise sampled from a Gaussian distribution with variance $M\in\mathbb{R}^{n_x \times n_x}$ and $N\in\mathbb{R}^{n_z \times n_z}$ respectively, and $v_t$ additive noise to the motion model at time $t$. Standard filter techniques are used to keep track of the state estimate $\hat{x}_t$ of the true state $x_t$. In~\cite{doi:10.1177/0278364913501564}, the authors show that under certain conditions, for a system given by (\ref{eqn:motion_obs_model}), an LQG controller can drive the system to any point $x\in\mathbb{X}$ starting from any Gaussian distribution. The authors also show that the estimated distribution of states converges to a unique deterministic stationary covariance. We use such a controller for trajectory tracking.% Thus to ensure that the goal position can be reached, as long as such a controller exists that ensures convergence of the robot towards the planned trajectory, the history of past points need not be considered for planning.

\subsection{Gaussian Process Distance Model}
\begin{figure}
    \centering
    \vspace{2mm}
    \includegraphics[width=\columnwidth]{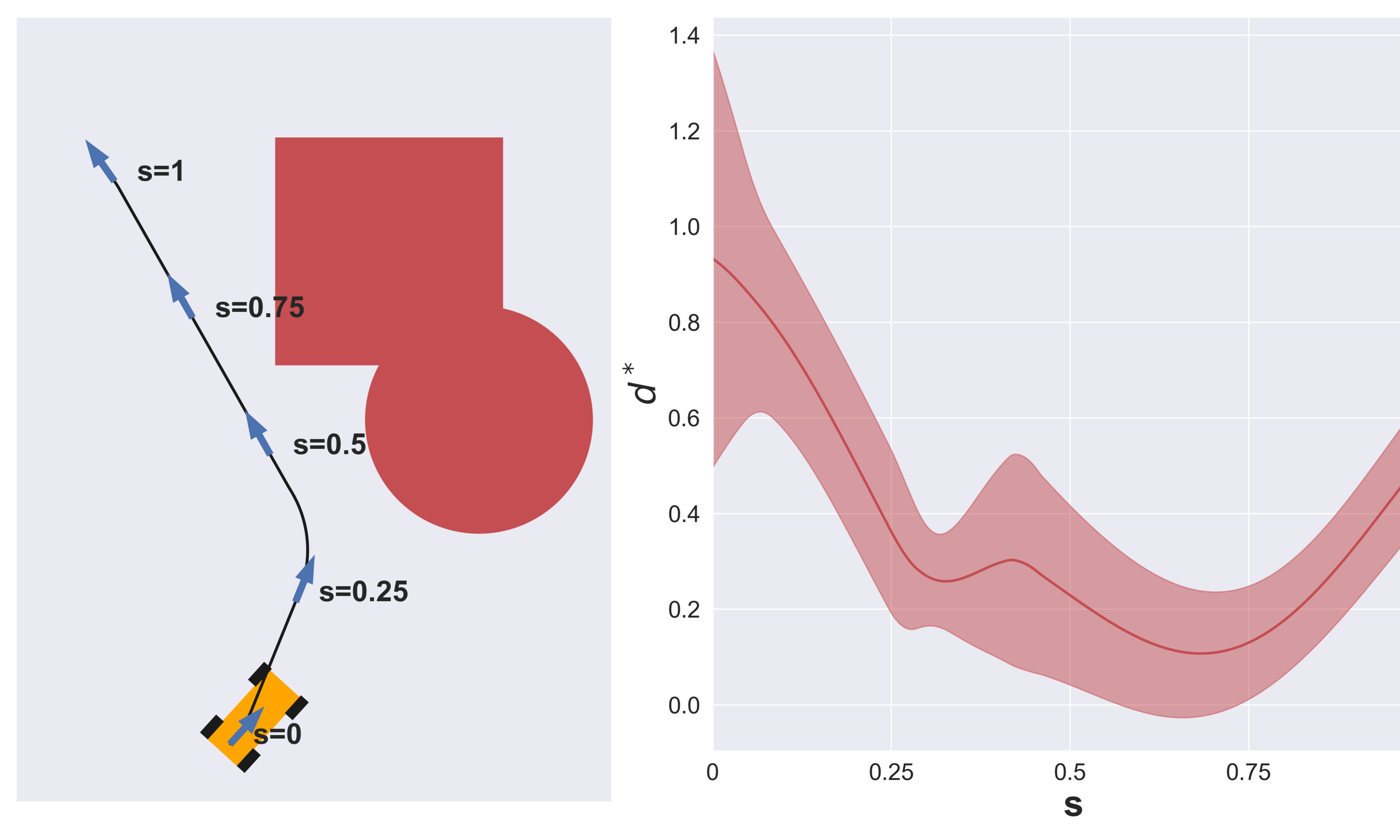}
    \caption{Left: An example of a trajectory in $\mathbb{X}$. It is parameterized using $s$ where $s=0$ and $s=1$ represent the start and end position. Right: The corresponding mean and standard deviation of distance-to-collision ($d^*$), given by (\ref{eqn:distance_distribution}), for points along the trajectory ($s$).} %The GP model captures the uncertainty that the robot might collide for $\alpha\in[0.5, 0.75]$ which matches the real scenario shown in the figure on the left, as the robot moves near the square and circle obstacle}
    \label{fig:gp_distance_distribution}
    \vspace{-2em}
\end{figure}
\noindent The shortest distance to a collision is modeled probabilistically over the entire state-space using a  GP. Given a model of the environment, a GP is constructed from sampled data by randomly moving the robot model around in the environment model and searching for the distance-to-collision using a geometric method for each estimated state. In a similar fashion to~\cite{das2020stochastic}, the distance-to-collision is evaluated using the Gilbert–Johnson–Keerthi (GJK) method, though any available geometric method can be used in practice. %We will assume that, from this point forward, distance-to-collision is always implying the shortest distance to collision. 

% Hence, for a current state $x_t$ and a random control input $u_t$,  the state $f(x_t,u_t)$ and the distance-to-collision from the estimated state, $\hat{x}_{t+1}$, is stored to model the GP. 
Given a prior number of samples $\mathcal{N}$, the set of states $X=\{x_1, x_2, \ldots, x_\mathcal{N}\}$, and the corresponding distance-to-collision from the estimated states, $\bm{d}=\{d_1, d_2, \ldots, d_\mathcal{N}\}$, are used to model the GP. Note that the subscript does not represent a sequence in time, rather a random mix of samples from various trajectories. This data captures the variation in distance-to-collision measure due to uncertainty in the motion and observation model (See Fig. \ref{fig:gp_distance_distribution} for an example of a trajectory in $\mathbb{X}$ and corresponding distance-to-collision distribution). Given data points $X$ and associated distance measure $\bm{d}$, for a state $\bm{x}^*$ the distribution of distance-to-collision, $d^*$, is given by:
\begin{IEEEeqnarray}{rCl}
    d^*\mid X,\bm{d},\bm{x}^* &\sim&  \mathcal{N}(\mathbb{E}[d^*]), \mathbb{V}[d^*])
    \label{eqn:distance_distribution}\\
    \mathbb{E}[d^*] &\triangleq&  \mathcal{K}(\bm{x}^*, X)[\mathcal{K}(X,X) + \sigma^2I]^{-1}\bm{d} \\ 
    \mathbb{V}[d^*] &=& \mathcal{K}(\bm{x}^*,\bm{x}^*)\\
    &&-\mathcal{K}(\bm{x}^*,X)[\mathcal{K}(X,X)+\sigma^2I]^{-1}\mathcal{K}(X,\bm{x}^*) \nonumber
\end{IEEEeqnarray}
where $\mathcal{K}$ is a stationary kernel function and $\sigma^2$ represents the variance of the observed distance measure. For the sake of brevity, we represent the vector $\mathcal{K}(\bm{x}^*, X)$ as $k(\bm{x}^*)$, the matrix $\mathcal{K}(X,X)$ as $K$, and the scalar $\mathcal{K}(\bm{x}^*, \bm{x}^*)$ as $k^*$. The kernel function $\mathcal{K}$ belongs to the class of covariance functions~\cite[Chapter 4]{10.5555/1162254} and is chosen based on the application. A Radial Basis Function (RBF) kernel is popular in most applications, though ~\cite{das2019forward} demonstrates that a forward kinematics kernel provides sparser and more accurate models for robot manipulators. 

\subsection{Chance Constraints}
\noindent The collision probability constraint for the state, $\bm{x}^*\in \mathcal{X}$, of the robot can be represented using the distance measure $d^*$, where 
\begin{equation}
    \mathcal{P}(\bm{x}^* \text{ is in collision})\leq \delta \implies \mathcal{P}(d^*<0) \leq \delta \label{eqn:constraint}
\end{equation}
This probabilistic constraint can be converted to a deterministic constraint as given in~\cite{CCOPP}, where
\begin{IEEEeqnarray}{rCl}
     \mathcal{P}(d^*<0) &\leq &\delta \Longleftrightarrow \frac{\mathbb{E}[d^*]}{\sqrt{2\mathbb{V}[d^*]}}\geq c \label{eqn:CC_eqn}\\
    c &=& \textrm{erf}^{-1}(1-2\delta)
\end{IEEEeqnarray}
and where $\textrm{erf}(z)$ is the Gaussian error function. The ratio of mean to standard deviation in (\ref{eqn:CC_eqn}) is defined as function $g$ given by:
\begin{equation}
     g(\bm{x}^*) \triangleq  \frac{k(\bm{x}^*)^T[K + \sigma^2I]^{-1}\bm{d}}{(2(k^*-k(\bm{x}^*)^T[K+\sigma^2I]^{-1}k(\bm{x}^*)))^\frac12} \label{eqn:mean_by_variance}\
\end{equation}

\subsection{CONNECT Function}
\noindent In sampling-based planners, which include popular approaches such as the many variations of Rapidly Exploring Random Trees (RRTs) and Probabilistic Roadmaps (PRMs), there exists a function that verifies if an edge can connect two nodes by checking if the points on the edge satisfy a set of constraints.
In our work, we are calling these functions \verb|CONNECT| functions. In traditional planners, the \verb|CONNECT| function checks for collision by subsampling the edge and evaluating if each point is collision-free. For probabilistic planners, the \verb|CONNECT| function needs to ensure that all the points on the edge satisfy the chance constraints. To verify if an edge from state $\bm{x_1}$ to state $\bm{x_2}$ meets the given constraints, the condition defined in (\ref{eqn:constraint}) must hold for all points on the edge. We can verify this by checking if the global minimum of (\ref{eqn:mean_by_variance}) satisfies the constraint from (\ref{eqn:CC_eqn}) for the path segment:
\begin{equation}
    \hat{c}\geq c, \qquad \hat{c}= \inf_{\bm{x}^*\in s(\bm{x_1}, \bm{x_2})}g(\bm{x}^*) \label{eqn:global_optim}
\end{equation}
and $s(\bm{x_1}, \bm{x_2})$ represents the trajectory between $\bm{x_1}$ and $\bm{x_2}$.

\subsection{Simplicial Homology Global Optimization}
\noindent To find the global minima of the function $g(\bm{x}^*)$ we use the Simplicial Homology Global Optimization (SHGO) algorithm as proposed in~\cite{SHGOEndres2018}. SHGO is a global optimization technique that exploits the objective function's topography to identify sub-domains where the global minimum may lie. 

The method samples the objective function by a predetermined number of samples and constructs a simplicial complex $\mathcal{H}$. The simplicial complex $\mathcal{H}$ can be conceived as a directed graph, where the vertices represent the value of the objective function at the sampled
points and the directed edges point towards the vertex with a higher objective value.  In  [28], a  vertex $v_i$ is defined as a local minimizer if all the edges connected to $v_i$ are directed away.  A minimizer set, $\mathcal{M}$, is formed with all such local minimizers. st($v_i$) defines a new space called the star of a vertex $v_i$ as the set of points $Q$ such that every simplex containing $Q$ contains $v_i$. The global minimum is found by searching through each sub-domain, st($v_i$), for all $v_i\in\mathcal{M}$. The authors prove that the cardinality of $\mathcal{M}$ remains unchanged with increasing samples, i.e., the number of regions to search for the global minimum does not change with increasing samples. The following theorem guarantees a stationary point in each of these sub-domains:
\begin{theorem}
Given a minimizer $v_i\in\mathcal{M}\subseteq\mathcal{H}$ on the surface of a continuous, Lipschitz smooth objective function $f$ with a compact bounded domain in $\mathbb{R}^n$ and range $\mathcal{R}$, there exist at least one stationary point of $f$ within the domain defined by st$(v_i)$~\cite{SHGOEndres2018}.
\label{theorem:exi_sol}
\end{theorem}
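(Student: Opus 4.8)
The plan is to reduce the statement to two classical facts: the extreme value theorem on a compact set and Fermat's interior-extremum condition, with Lipschitz smoothness supplying the differentiability the latter needs. First I would pass from the open star to its closure $\overline{\text{st}}(v_i)$. Because the complex $\mathcal{H}$ is finite and each simplex is compact, $\overline{\text{st}}(v_i)$ is a compact subset of the domain, and for an interior vertex it is a full-dimensional neighborhood of $v_i$ in $\mathbb{R}^n$ (for a vertex on the domain boundary one reads ``stationary'' in the constrained, first-order sense). Since $f$ is continuous, it attains a global minimum over $\overline{\text{st}}(v_i)$ at some point $x^\star$, and because $v_i\in\overline{\text{st}}(v_i)$ we have $f(x^\star)\le f(v_i)$.

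Next I would exploit the minimizer property of $v_i$. As every edge incident to $v_i$ is directed away from it, each adjacent vertex $v_j$ satisfies $f(v_j)>f(v_i)$; write $\gamma=\min_j\bigl(f(v_j)-f(v_i)\bigr)>0$ for the strict gap. If $x^\star$ lies in the relative interior of $\overline{\text{st}}(v_i)$ --- in particular if $x^\star=v_i$ --- then $x^\star$ minimizes $f$ on an open neighborhood, so Lipschitz smoothness gives differentiability and Fermat's theorem yields $\nabla f(x^\star)=0$, i.e.\ the desired stationary point inside st$(v_i)$.

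The main obstacle is excluding the remaining case, in which the closed-star minimizer $x^\star$ lies on the link $\partial\,\overline{\text{st}}(v_i)$, where Fermat's condition does not apply. The discrete minimizer condition controls $f$ only at the link \emph{vertices}, not along the faces between them, so a priori $f$ could dip below $f(v_i)$ on a boundary face. Here I would invoke Lipschitz smoothness quantitatively: letting $L$ bound the variation of $f$ and $h$ the diameter of the simplices meeting $v_i$, every link point $y$ lies within distance $h$ of some adjacent vertex $v_j$, whence $f(y)\ge f(v_j)-Lh\ge f(v_i)+\gamma-Lh$. Provided the sampling is fine enough that $Lh<\gamma$, this forces $f>f(v_i)$ on the entire link, contradicting $f(x^\star)\le f(v_i)$ and driving $x^\star$ into the interior, which closes the argument.

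A second route that sidesteps this fineness bookkeeping is degree-theoretic: treating $\nabla f$ as a vector field on the star-shaped, contractible set $\overline{\text{st}}(v_i)$, if one can certify that $\nabla f$ points consistently inward along the link then its Brouwer degree on the boundary is nonzero, so by a Poincar\'e--Hopf count $\nabla f$ must vanish somewhere inside. I would note, however, that establishing the inward-pointing boundary behavior again rests on the same Lipschitz control of $f$ on the link, so this packaging repositions rather than removes the difficulty. In either formulation I expect the boundary case to be the crux of the proof, with everything else following from standard compactness and first-order optimality arguments.
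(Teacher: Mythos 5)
You are reconstructing a proof that the paper itself never gives: Theorem~\ref{theorem:exi_sol} is quoted from the SHGO reference~\cite{SHGOEndres2018} and used as a black box, so there is no in-paper argument to compare against, and your proposal must stand on its own. It does not, and the gap is precisely the one you flag yourself: the case where the minimizer $x^\star$ of $f$ over $\overline{\mathrm{st}}(v_i)$ lies on the link. Your repair requires the quantitative hypothesis $Lh<\gamma$, where $h$ is the simplex diameter and $\gamma=\min_j\bigl(f(v_j)-f(v_i)\bigr)$ is the discrete gap. Nothing in the theorem's hypotheses supplies this: the complex $\mathcal{H}$ is built from finitely many samples of an unknown $f$, so $\gamma$ can be arbitrarily small relative to $Lh$ (and typically shrinks as sampling is refined), and the condition cannot even be certified algorithmically since $L$ is unknown. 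A proof that is conditional on an unstated mesh-fineness assumption is not a proof of the stated theorem. Your degree-theoretic alternative, as you concede, rests on the same inward-pointing control of $\nabla f$ along the link, so it inherits the identical hole rather than closing it.

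The gap is essential, not a bookkeeping nuisance, because the statement as written (with no restriction on how $\mathcal{H}$ is constructed) is actually false, so no amount of analysis of $f$ alone can finish your argument. Sketch: in $\mathbb{R}^2$ take $f(q)=\|q-P\|^2$, put $v_i$ at the origin with $\|v_i-P\|=2r$, and build the star of $v_i$ as a fan of four skinny triangles whose link vertices sit at distance $r$ from $v_i$ at angles $\pm 80^\circ$, $\pm 160^\circ$ measured from the direction of $P$. Each link vertex $v_j$ satisfies $f(v_j)=r^2+4r^2-4r^2\cos\theta_j>4r^2=f(v_i)$ exactly because $\cos\theta_j<1/4$, so $v_i$ is a discrete minimizer; yet the star extends only about $r\cos 80^\circ$ in the direction of $P$, hence excludes $P$, and $\nabla f=2(q-P)$ vanishes nowhere on it. This is a perfectly valid simplicial complex, a $C^\infty$ (hence Lipschitz smooth) $f$, and a compact domain, with no stationary point --- constrained or not --- in $\mathrm{st}(v_i)$. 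What rules such configurations out in SHGO is the particular sampling-and-triangulation scheme (e.g.\ Delaunay-type complexes, under which a sample near $P$ would be connected to $v_i$ and destroy the minimizer property); that structural information about $\mathcal{H}$ is the missing ingredient any correct proof must invoke, and your proposal never uses it.
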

% where $\mathcal{H}$ represents the simplicial complex constructed from random samples of the objective function, $\mathcal{M}$ represents the set of all local minimizer points in $\mathcal{H}$, and st($v_i$) represents the sub-domain of interest. 
Thus if the objective function is Lipschitz smooth and an adequate number of samples is given, SHGO is able to converge to the global minimum. 

In our situation, to use SHGO to identify the global minimum, we show that (\ref{eqn:mean_by_variance}) is Lipschitz smooth. (\ref{eqn:mean_by_variance}) can be re-written as a composition of two functions, $g(\bm{x}^*) = q\circ k(\bm{x}^*)$. For simplicity, we assume $k^*$ to be 1. First, we show that the function $q$ is Lipschitz continuous.
\begin{lemma}
For $\bm{k}\in [0,1]^n$, $K\succeq0$ and $\sigma >0$, the function $q(\bm{k})$ given by
\begin{equation}
    q(\bm{k}) = \frac{\bm{k}^T(\sigma^2 I+K)^{-1}\bm{d}}{(2(1-\bm{k}^T(\sigma^2 I+K)^{-1}\bm{k}))^{1/2}}
    \label{eqn:function_g}
\end{equation}
satisfies the Lipschitz condition:
\begin{IEEEeqnarray}{rl}
        &\|q(\bm{k_1})-q(\bm{k_2})\| \leq L_q \|\bm{k_1}-\bm{k_2}\|\\
        L_q &=\frac{\|(\sigma^2 I +K)^{-1}\bm{d}\|}{\sqrt{2}}\left( \frac{1}{1-\lambda_{max}n}\right)^{3/2}
\end{IEEEeqnarray}
where $\lambda_{max}$ is the largest eigenvalue of $(I\sigma^2 + K)^{-1}$, and $\bm{k_1}, \bm{k_2}\in[0,1]^n$.
\label{lemma:mean_var}
\end{lemma}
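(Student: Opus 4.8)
The plan is to prove that $q$ is globally Lipschitz on the box $[0,1]^n$ by exhibiting a uniform bound on $\|\nabla q\|$ over that (convex) set; the mean value inequality then gives $\|q(\bm{k_1})-q(\bm{k_2})\|\le\big(\sup_{\bm{k}\in[0,1]^n}\|\nabla q(\bm{k})\|\big)\,\|\bm{k_1}-\bm{k_2}\|$, so it suffices to show this supremum is at most $L_q$. To keep the algebra compact I would write $A=(\sigma^2 I+K)^{-1}$, which is symmetric positive definite because $\sigma>0$ and $K\succeq0$, and set $\bm{b}=A\bm{d}$ and $u(\bm{k})=1-\bm{k}^T A\bm{k}$, so that $q(\bm{k})=\bm{b}^T\bm{k}\,/\,\sqrt{2u(\bm{k})}$.

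The first substantive step is to certify that $q$ is differentiable on the domain, i.e. that the denominator stays strictly positive. Since each coordinate of $\bm{k}$ lies in $[0,1]$ we have $\|\bm{k}\|^2=\sum_i k_i^2\le n$, hence $\bm{k}^T A\bm{k}\le\lambda_{max}\|\bm{k}\|^2\le\lambda_{max}n$, where $\lambda_{max}$ is the largest eigenvalue of $A$. Under the standing assumption $\lambda_{max}n<1$ --- precisely the condition keeping the modeled posterior variance in (\ref{eqn:function_g}) positive along the segment --- this gives $u(\bm{k})\ge1-\lambda_{max}n>0$, so $\sqrt{2u}$ is bounded away from zero and $q$ is smooth on $[0,1]^n$.

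Next I would differentiate with the quotient rule, using $\nabla(\bm{b}^T\bm{k})=\bm{b}$ and $\nabla u=-2A\bm{k}$, and collect terms to obtain
\[
\nabla q(\bm{k})=\frac{1}{\sqrt{2}}\,u^{-3/2}\Big[\,u\,\bm{b}+(\bm{b}^T\bm{k})\,A\bm{k}\,\Big].
\]
The lower bound $u\ge1-\lambda_{max}n$ controls the scalar factor as $u^{-3/2}\le(1-\lambda_{max}n)^{-3/2}$, so the remaining work is to estimate the bracketed vector. A short calculation shows this bracket collapses exactly to $\bm{b}$ in one dimension; in general I would bound $\|u\,\bm{b}+(\bm{b}^T\bm{k})A\bm{k}\|$ via the triangle inequality, Cauchy--Schwarz applied to $\bm{b}^T\bm{k}$, and the eigenvalue/box estimates above, so that everything reduces to the scale $\|\bm{b}\|=\|(\sigma^2 I+K)^{-1}\bm{d}\|$. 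Combining the two factors yields $L_q=\tfrac{\|(\sigma^2 I+K)^{-1}\bm{d}\|}{\sqrt{2}}\big(1-\lambda_{max}n\big)^{-3/2}$.

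I expect the main obstacle to be exactly this last estimate of the bracketed vector, because the cross term $(\bm{b}^T\bm{k})A\bm{k}$ does not in general point along $\bm{b}$, so a naive triangle-inequality bound leaves a residual proportional to $n\lambda_{max}$. Obtaining the clean stated constant therefore hinges on carefully exploiting the box constraint $\|\bm{k}\|^2\le n$ together with the positivity margin $1-\lambda_{max}n$ to absorb that residual, and it is here that the assumption $\lambda_{max}n<1$ does the real work --- both guaranteeing the function is well defined and ensuring the variance term $u$ dominates so the gradient stays uniformly bounded. A final bookkeeping step is to confirm that this pointwise bound over the convex domain transfers to the claimed Lipschitz inequality via the mean value theorem.
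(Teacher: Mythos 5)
Your strategy---certifying a uniform gradient bound over the convex box and invoking the mean value inequality---is a genuinely different route from the paper's, and it can be made to work. The paper never differentiates: it expands the reciprocal square root in the denominator as the Taylor series (\ref{eqn:Taylor_series}), bounds the $m$-th term of the resulting difference by $(1+2m)(\lambda_{max}n)^m\|\bm{k_1}-\bm{k_2}\|$ via Lemma \ref{lemma:norm_over_vec} (which itself rests on two further appendix lemmas), and then resums the series using (\ref{eqn:series_1}) and (\ref{eqn:series_2}). Your route replaces all of that machinery with one quotient-rule computation, and your gradient formula, in your notation $A=(\sigma^2I+K)^{-1}$, $\bm{b}=A\bm{d}$, $u(\bm{k})=1-\bm{k}^TA\bm{k}$,
\begin{equation*}
\nabla q(\bm{k})=\tfrac{1}{\sqrt{2}}\,u^{-3/2}\bigl[\,u\,\bm{b}+(\bm{b}^T\bm{k})\,A\bm{k}\,\bigr],
\end{equation*}
is correct. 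Your explicit standing assumption $\lambda_{max}n<1$ is also not an added hypothesis in disguise: the paper needs it implicitly for its series to converge and for the stated $L_q$ to be finite.

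The gap is exactly the step you flag as the ``main obstacle,'' and as you have framed it, it genuinely fails: once you bound $u^{-3/2}\le(1-\lambda_{max}n)^{-3/2}$ separately, you would need $\|u\,\bm{b}+(\bm{b}^T\bm{k})A\bm{k}\|\le\|\bm{b}\|$, and this is false in general. For $A=\mathrm{diag}(\lambda,0)$ and $\bm{k}=\bm{b}=(1,1)^T$ the bracket equals $(1+\lambda,\,1-\lambda)^T$, whose norm $\sqrt{2+2\lambda^2}$ strictly exceeds $\|\bm{b}\|=\sqrt{2}$. The correct move is to distribute the factor $u^{-3/2}$ over the two terms \emph{before} estimating. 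Writing $\epsilon=1-\lambda_{max}n$, the triangle inequality, Cauchy--Schwarz ($|\bm{b}^T\bm{k}|\le\|\bm{b}\|\sqrt{n}$), and the eigenvalue/box bound ($\|A\bm{k}\|\le\lambda_{max}\sqrt{n}$) give
\begin{align*}
\sqrt{2}\,\|\nabla q(\bm{k})\| &\le u^{-1/2}\|\bm{b}\| + u^{-3/2}|\bm{b}^T\bm{k}|\,\|A\bm{k}\| \\
&\le \|\bm{b}\|\bigl(u^{-1/2}+\lambda_{max}n\,u^{-3/2}\bigr)\\
&\le \|\bm{b}\|\bigl(\epsilon^{-1/2}+(1-\epsilon)\,\epsilon^{-3/2}\bigr) = \|\bm{b}\|\,\epsilon^{-3/2},
\end{align*}
where the third inequality uses that $u\ge\epsilon$ and that both powers of $u^{-1}$ are decreasing in $u$, and the final equality is the exact cancellation $\epsilon^{-1/2}+(1-\epsilon)\epsilon^{-3/2}=\epsilon^{-3/2}$. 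The residual proportional to $\lambda_{max}n$ that worried you is thus absorbed with no slack at all, yielding precisely the claimed $L_q$; this is the same cancellation the paper performs in its final line, where $(1-a)^{-1/2}+a(1-a)^{-3/2}=(1-a)^{-3/2}$ with $a=\lambda_{max}n$. With this estimate inserted, your proof is complete and is arguably cleaner than the paper's series argument.
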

\begin{proof}
For $\bm{k_1}, \bm{k_2} \in [0,1]^n$, we can write $\|q(\bm{k_1})-q(\bm{k_2})\|$ as:
\begin{IEEEeqnarray}{rl}
    \|q(\bm{k_1})-q(\bm{k_2})\| = &\left\| \frac{\bm{k_1}^T(\sigma^2 I+K)^{-1}\bm{d}}{(2(1-\bm{k_1}^T(\sigma^2 I+K)^{-1}\bm{k_1}))^{1/2}} \right.\label{eqn:complex_eqn2}\\
   &\left.-\frac{\bm{k_2}^T(\sigma^2 I+K)^{-1}\bm{d}}{(2(1-\bm{k_2}^T(\sigma^2 I+K)^{-1}\bm{k_2}))^{1/2}}\right\| \nonumber
\end{IEEEeqnarray}
Let $M=(\sigma^2 I+K)^{-1}$. Since $K$ is a Gram matrix of a covariance function, the matrix $M$ is positive definite and symmetric~\cite[Chapter 4]{10.5555/1162254}. Hence we can simplify (\ref{eqn:complex_eqn2}) as:
\begingroup
\allowdisplaybreaks
\begin{align*}
    (\ref{eqn:complex_eqn2}) &\leq \left\| \frac{\bm{k_1}}{(1-\bm{k_1}^TM\bm{k_1})^{1/2}} - \frac{\bm{k_2}}{(1-\bm{k_2}^TM\bm{k_2})^{1/2}} \right\| \frac{\|M\bm{d}\|}{\sqrt{2}} \\
                                 &\tag{from Cauchy-Schwarz inequality}\\
                             &\leq \left \| \bm{k_1}\left(1+\sum_{m=1}^{\infty} \frac{(\bm{k_1}^TM\bm{k_1})^m}{m!}\prod_{i=0}^{m-1}\left(\frac{1}{2}+i\right) \right)\right. \\
                             &\left.\ - \bm{k_2}\left(1+\sum_{m=1}^{\infty} \frac{(\bm{k_2}^TM\bm{k_2})^m}{m!}\prod_{i=0}^{m-1}\left(\frac{1}{2}+i\right) \right) \right \| \frac{\|M\bm{d}\|}{\sqrt{2}}\\
                             \\
                             & \tag{from (\ref{eqn:Taylor_series}) in Appendix} \\
                             &\leq \left\| \bm{k_1} - \bm{k_2}  + \sum_{m=1}^{\infty} \frac{(\bm{k_1}^TM\bm{k_1})^m \bm{k_1}-(\bm{k_2}^TM\bm{k_2})^m\bm{k_2}}{m!}\right.\\
                             &\qquad\qquad\qquad\qquad\qquad\qquad \left.\prod_{i=0}^{m-1}\left(\frac{1}{2}+i\right)\right\|\frac{\|M\bm{d}\|}{\sqrt{2}} \\
                            %  &\leq \left(\| \bm{k_1}-\bm{k_2}\| +\sum_{m=1}^{\infty} \frac{\|(\bm{k_1}^TM\bm{k_1})^m \bm{k_1}-(\bm{k_2}^TM\bm{k_2})^m\bm{k_2}\| }{m!}\right.\\
                            %  &\ \left.\prod_{i=0}^{m-1}\left(\frac{1}{2}+i\right)\right) \frac{\|M\bm{d}\|}{\sqrt{2}}\\
                             &\leq \left(\| \bm{k_1}-\bm{k_2}\| +\sum_{m=1}^{\infty} \frac{\|\|\bm{k_1}\|_M^{2m} \bm{k_1}-\|\bm{k_2}\|_M^{2m}\bm{k_2}\| }{m!}\right.\\
                         &\qquad\qquad\qquad\qquad\qquad\qquad \left.\prod_{i=0}^{m-1}\left(\frac{1}{2}+i\right)\right) \frac{\|M\bm{d}\|}{\sqrt{2}}\\
                             &\tag{from triangle inequality} \\
                             &\leq \|\bm{k_1}-\bm{k_2}\|\frac{\|M\bm{d}\|}{\sqrt{2}}\\
                             &\quad\left( 1 + \sum_{m=1}^{\infty} \frac{(1+2m)(\lambda_{max}n)^m}{m!}\prod_{i=0}^{m-1}\left(\frac{1}{2}+i\right)\right)\\
                             &\tag{from Lemma \ref{lemma:norm_over_vec} in Appendix} \\
                             &\leq \|\bm{k_1}-\bm{k_2}\|\frac{\|M\bm{d}\|}{\sqrt{2}}\\
                             &\quad\left( 1 + \sum_{m=1}^{\infty} \frac{(\lambda_{max}n)^m}{m!}\prod_{i=0}^{m-1}\left(\frac{1}{2}+i\right) \right.\\
                             &\quad \qquad\left.+ 2\sum_{m=1}^{\infty} \frac{m(\lambda_{max}n)^m}{m!}\prod_{i=0}^{m-1}\left(\frac{1}{2}+i\right)\right ) \\
                             &\leq \|\bm{k_1}-\bm{k_2}\|\frac{\|M\bm{d}\|}{\sqrt{2}}\left( \frac{1}{(1-\lambda_{max}n)^{1/2}} +\right. \\
                             & \left. \qquad \qquad \qquad \qquad 2\frac{\lambda_{max}n}{2(1-\lambda_{max}n)^{3/2}} \right) \\
                             & \tag{from (\ref{eqn:series_1}) and (\ref{eqn:series_2}) in Appendix} \\
                             & \leq \|\bm{k_1}-\bm{k_2}\|\frac{\|M\bm{d}\|}{\sqrt{2}}\left( \frac{1}{1-\lambda_{max}n}\right)^{3/2}
\end{align*}
\endgroup
% \vspace{-1em}
\end{proof}
Using Lemma \ref{lemma:mean_var}, we can show that (\ref{eqn:mean_by_variance}) is Lipschitz continuous for a Lipschitz continuous kernel function $k$.
\begin{theorem}
For a Lipschitz continuous kernel $k$, (\ref{eqn:mean_by_variance}) satisfies the Lipschitz condition.
\begin{equation}
    \|q\circ k(\bm{x_1}) - q\circ k(\bm{x_2})\| \leq L_q L_k \|\bm{x_1}-\bm{x_2}\| 
\end{equation}
where $L_k$ is the Lipschitz constant for the kernel function.
\end{theorem}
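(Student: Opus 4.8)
The plan is to exploit the standard fact that the composition of two Lipschitz continuous maps is again Lipschitz, with Lipschitz constant equal to the product of the two individual constants. Since Lemma \ref{lemma:mean_var} already certifies that $q$ is Lipschitz with constant $L_q$ over the cube $[0,1]^n$, and the theorem hypothesis grants that the kernel map $k$ is Lipschitz with constant $L_k$, almost all of the real content is already in place; the remaining work is to chain these two facts together correctly and to confirm that they compose legitimately.

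First I would write the target difference as $\|q(k(\bm{x_1})) - q(k(\bm{x_2}))\|$ and apply the Lipschitz bound for $q$ directly to the two arguments $k(\bm{x_1})$ and $k(\bm{x_2})$, giving $\|q(k(\bm{x_1})) - q(k(\bm{x_2}))\| \leq L_q \|k(\bm{x_1}) - k(\bm{x_2})\|$. Next I would apply the Lipschitz bound for the kernel map to the inner difference, $\|k(\bm{x_1}) - k(\bm{x_2})\| \leq L_k \|\bm{x_1} - \bm{x_2}\|$, and substitute to obtain the claimed product bound $L_q L_k \|\bm{x_1} - \bm{x_2}\|$. This is a two-line chaining argument once the pieces are aligned.

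The one step that genuinely requires care — and that I expect to be the main (if modest) obstacle — is verifying that the image of $k$ lands inside $[0,1]^n$, because Lemma \ref{lemma:mean_var} only certifies the constant $L_q$ for arguments drawn from that cube. Here I would invoke the assumption that $\mathcal{K}$ is a stationary kernel normalized so that $k^* = 1$, as stipulated just before Lemma \ref{lemma:mean_var}. For such a kernel each entry $\mathcal{K}(\bm{x}^*, x_j)$ of the vector $k(\bm{x}^*)$ is a valid correlation value bounded between $0$ and $1$ (for instance, an RBF kernel yields $\mathcal{K}(\bm{x}^*, x_j) = \exp(-\|\bm{x}^* - x_j\|^2/(2\ell^2)) \in (0,1]$), so $k(\bm{x}^*) \in [0,1]^n$ for every $\bm{x}^*$ and the hypothesis of Lemma \ref{lemma:mean_var} is met at every evaluation point. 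With this domain compatibility confirmed, the composition inequality holds for all $\bm{x_1}, \bm{x_2}$ in the domain, establishing that $g = q \circ k$ is Lipschitz continuous. Since $g$ is also differentiable as a composition of smooth maps, it is Lipschitz smooth, which is precisely the regularity Theorem \ref{theorem:exi_sol} requires for SHGO to locate the global minimum.
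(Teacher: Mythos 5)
Your proposal is correct and follows essentially the same route as the paper, which simply states the theorem as a direct consequence of Lemma \ref{lemma:mean_var} via the standard fact that a composition of Lipschitz maps is Lipschitz with the product of the constants. Your explicit verification that the image of $k$ lies in $[0,1]^n$ (so that Lemma \ref{lemma:mean_var} is actually applicable) is a point the paper leaves implicit, and is a worthwhile bit of added care rather than a departure in approach.
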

Thus for planning, the \verb|CONNECT| function within sampling-based planners finds the global minimum of (\ref{eqn:mean_by_variance}) using SHGO and verifies that the user-defined chance constraints are satisfied for the given segment.
\begin{figure*}[t]
    \centering
    \vspace{2mm}
    \includegraphics[width=\textwidth]{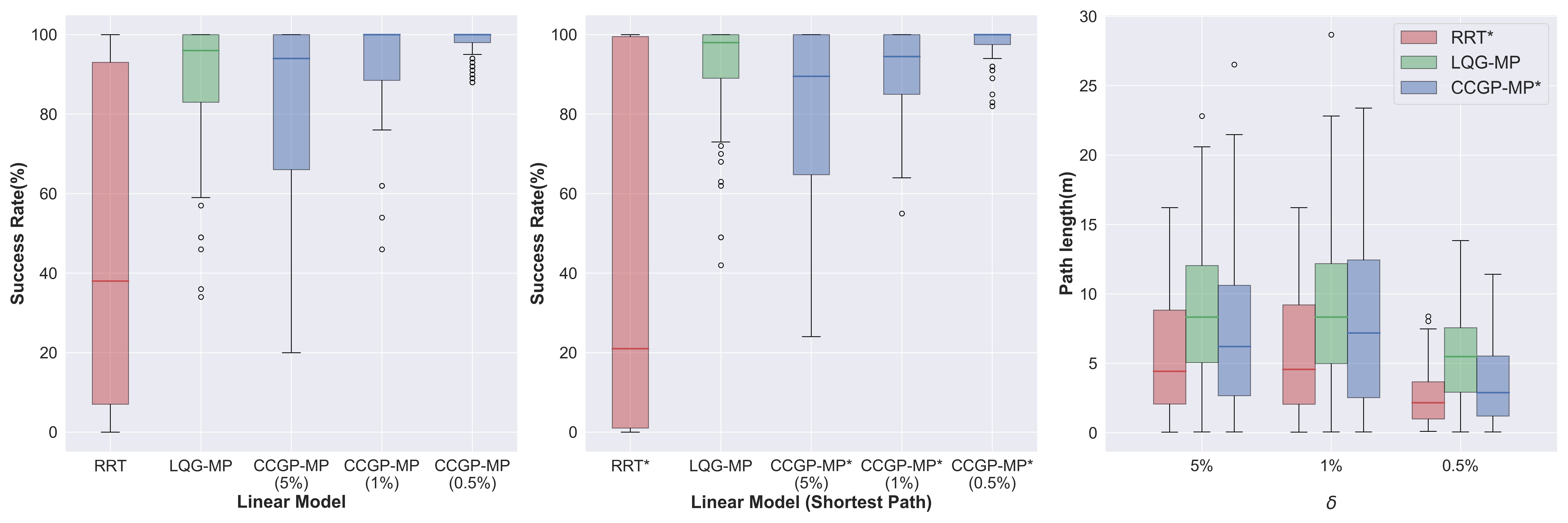}
    \caption{We compare the success rate and path length of the planned paths for the Linear model. Left: The quartile plot compares the success rate for a planning problem without any optimality criteria. Center: The quartile plot compares the success rate for a planning problem with added optimality criteria for reducing path length. Right: The quartile plot compares the length for the same set of start and goal pairs for different $\delta$ values. The plans generated by CCGP-MP and CCGP-MP* are robust to motion and sensor noise, and CCGP-MP* generates shorter paths than LQG-MP.}
    \label{fig:linear_experiments}
    \vspace{-1.5em}
\end{figure*} 
% \begin{algorithm}
% \SetAlgoLined
% $V\gets \{x_{start} \}, E\gets \phi$\;
% \For{$i=1, \ldots, n$}{
%     $x_{rand}\gets$SampleFree()\;
%     $x_{nearest}\gets$Nearest($G=(V,E), x_{rand}$)\;
%     \If{Connect($x_{nearest}, x_{rand}$)}{
%         $X_{near}\gets$Near($G=(V,E), x_{rand}$)\;
%         $V\gets V\cup \{x_{rand}\}$ \;
%         $x_{min}\gets x_{nearest}; c_{min}\gets$ Cost($x_{nearest}$) + c($x_{nearest}, x_{rand}$) \;
%         \ForEach{$x_{near}\in X_{near}$}{
%             \If{Connect($x_{near}, x_{rand}$) $\wedge$ Cost($x_{near}$)+ c($x_{near}, x_{rand}$)$<c_{min}$} {
%                 $x_{min}\gets x_{near};c_{min}\gets$Cost($x_{near}$)+ c($x_{near}, x_{rand}$);
%             }
%         }
%         $E\gets E\cup \{(x_{min}, x_{rand})\}$\;
%         \ForEach(Rewire step){$x_{near}\in X_{near}$}{
%             \If{Connect($x_{rand}, x_{near}$) $\wedge$ Cost($x_{rand}$)+c($x_{near}, x_{rand}$)$<$Cost($x_{near}$)} {
%                 $x_{parent}\gets$Parent($x_{near}$)\;
%                 $E\gets  (E\backslash \{(x_{parent}, x_{near})\})\cup \{(x_{rand}, x_{near})\}$\;
%             }
%         }
%     }
% }
% \Return $G=(V,E)$
% \caption{RRT*}
% \label{algo:RRT_star}
% \end{algorithm}
% 

\section{EXPERIMENTS AND RESULTS}
\noindent To evaluate the CCGP-MP technique's performance, we tested it on two noisy robot models - a Linear and a Dubins Car model. We explored the planner's performance for 200 random start and goal pairs for different $\delta$ values on randomly generated environments of blocks and circles. Next, to investigate the effects of the increased number of obstacles in the environment on the planning time and accuracy, we evaluated CCGP-MP for the Linear system on 6 randomly generated environments for 10 random start and goal pairs. In addition to these simulated test environments, we assessed the Dubins Car model in a realistic indoor environment taken from the Gibson Environment suite~\cite{xiazamirhe2018gibsonenv}.  
\begin{figure}[h]
    \centering
    \vspace{2mm}
    \includegraphics[width=\columnwidth]{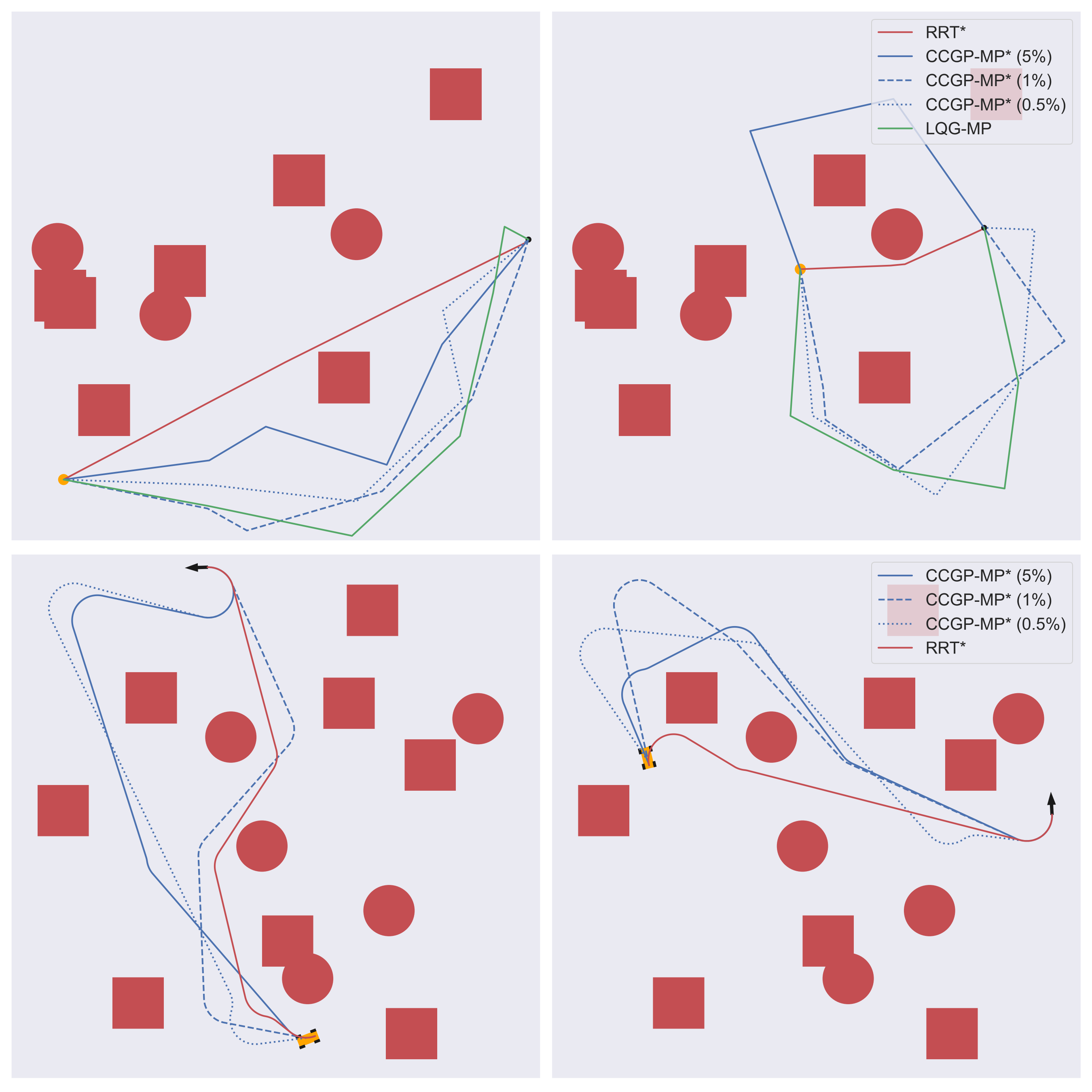}
    \caption{The top row shows the plans generated for different start and goal pairs for the Linear model, while the bottom row does the same for the Dubins Car model. The goals are marked using a black circle for the Linear model and a black arrow for the Dubins Car model. CCGP-MP* deviates from the RRT* planner to satisfy chance constraints.}
    \label{fig:Paths}
    \vspace{-2em}
\end{figure}

In our experiments, we report the $\delta$ values used as percentages, since from our definition in (\ref{eqn:constraint}), it represents the upper bound of the probability measure of a state in a collision. For each robot, 2000 state-distance pairs sampled randomly in the environment, and a RBF kernel are used to define the GP model. To measure the distance to a collision for sampled states, GJK was used over the map. Each planned path's performance was evaluated for 100 trials using a Linear Quadratic Regulator (LQR) based trajectory following controller. A trial was concluded to be successful if the robot could reach the goal region without colliding with any obstacles. We used the Open Motion Planning Library (OMPL)~\cite{ompl} to implement the RRT and RRT* planners. We integrated the \verb|CONNECT| function for both RRT and RRT* planners and called the resulting planners CCGP-MP and CCGP-MP*, respectively. All experiments were written in the Python Programming Language and executed on an AMD Ryzen 2950x CPU with 32GB of RAM. In this section, we provide details of our experiment setup and report our results.

\subsection{Linear Model}

\begin{figure*}
    \centering
    \vspace{2mm}
    \includegraphics[width=2\columnwidth]{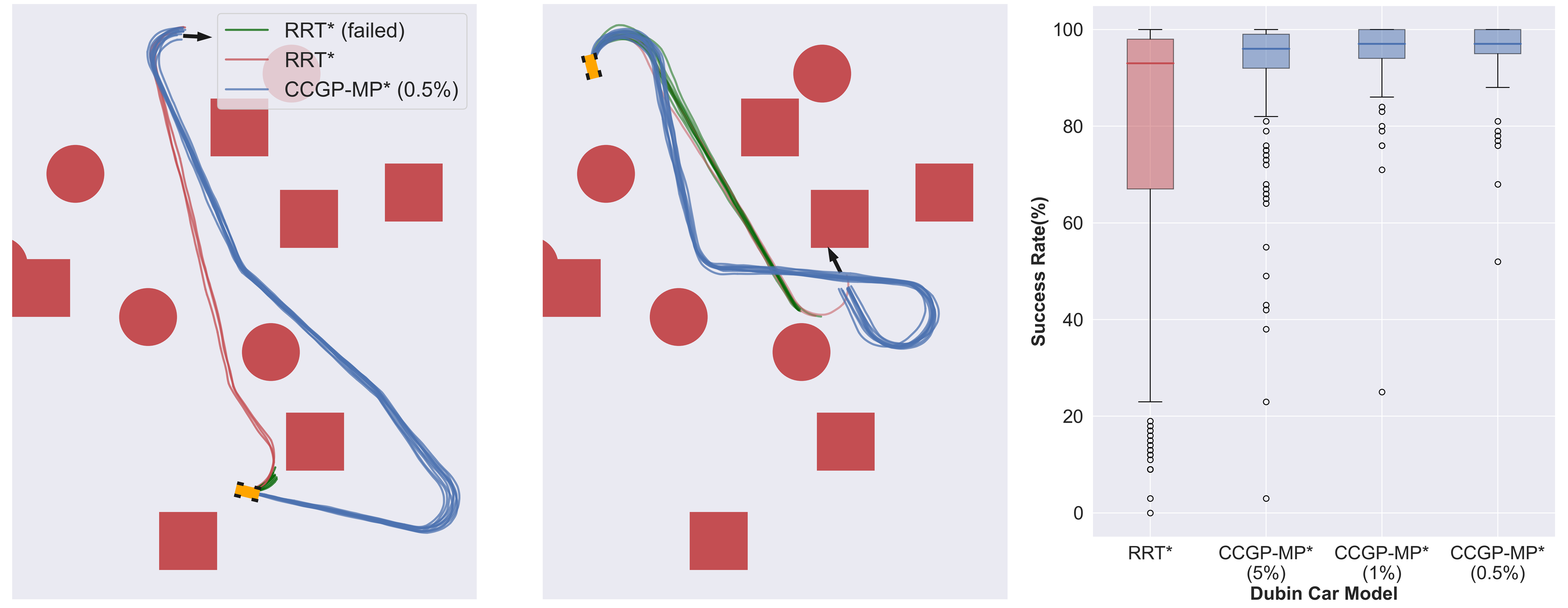}
    \caption{Left and Center: Two examples of path roll-outs for RRT* and CCGP-MP* for a noisy Dubins Car model. The robot collides with the obstacle while executing the path from RRT* (green), while the trajectory generated by CCGP-MP* operates without collision. Right: The quartile plot comparing the success rate of planned paths for random start and goal pairs for a noisy Dubins Car model.}
    \vspace{-2em}
    \label{fig:dubins_ccgp_comb}
\end{figure*}
% \begin{figure}
%     \centering
%     \includegraphics[width=\columnwidth]{}
%     \caption{Two examples of paths rolled out by RRT* and CCGP-MP* for a noisy Dubins Car model. Multiple evaluations of planned paths capture the performance of the planner in unstructured environments. The robot collides with the obstacle while executing the path from RRT* (red rectangles), while the trajectory generated by CCGP-MP* (blue rectangles) operates without collision.}
%     \label{fig:dubins_ccgp_mp_path}
%     \vspace{-1.5em}
% \end{figure}
\noindent The first system we tested was a simple 2D model (see Fig. \ref{fig:Paths} top row). As the robot is symmetric about its base, we plan in the $\mathbb{R}^2$ space. The discrete-time dynamics model of the system is given by:
\begin{subequations}
\begin{align}
    f(\bm{x},\bm{u}) &= \bm{x} + \bm{u} + m, \qquad m\sim N(0, 0.1I) \\
    \bm{z} &= f(\bm{x},\bm{u}) + n \qquad n\sim N(0, 0.01I) 
\end{align}
\end{subequations}
The LQG controller used a Kalman Filter for state estimation and an infinite horizon LQR for generating the control signal.% As shown in~\cite{10.5555/517430}, the LQG controller is an optimal controller for the linear system with Gaussian Noise.
\subsection{Dubins Model}
\noindent We also tested CCGP-MP on a Dubins Car model whose dynamics is described by:
\begin{subequations}
\begin{align}
f(\bm{x}, u) &= \bm{x} + 
\begin{bmatrix}
\frac{v}{u}(-\sin(\theta)+\sin(\theta+\tau u))\tau \\ \frac{v}{u}(\cos(\theta)-\cos(\theta+\tau u))\tau  \\
u\tau
\end{bmatrix} 
% \\ \bm{z} &= f(\bm{x},\bm{u}) + n \qquad n\sim N(0, 0.01I)
\end{align}
\end{subequations}
where $\tau$ is the time step, and $v$ is the linear velocity of the car. The state $\bm{x} =\begin{bmatrix} x & y & \theta \end{bmatrix}$ and control $u=\omega$ where $(x,y)$ represents the position, $\theta$ the orientation, and $\omega$ the robot's angular velocity. The noisy motion model is implemented as described in~\cite{10.5555/1121596} with linear and angular velocity noise sampled from from $\mathcal{N}(0, 0.1)$, and rotation noise sampled from $\mathcal{N}(0,5^{\circ})$ . The boundary value problem of connecting the sampled state was solved using Dubins curves. An Extended Kalman Filter was used to estimate the robot state, and similar to~\cite{doi:10.1177/0278364913501564}, and a time-varying LQG controller was used to track the trajectory.
% \begin{figure}
%     \centering
%     \includegraphics[width=\columnwidth]{}
%     \caption{Figure comparing the accuracy of the Dubin's car model with simple RRT* method.}
%     \label{fig:dubins_car_accuracy}
%     \vspace{-1.5em}
% \end{figure}
\subsection{Experiment Results}
% \begin{figure}
%     \centering
%     \includegraphics[width=\columnwidth]{figures/RRT_star_dubins_CCGP_MP.png}
%     \caption{Quartile plot comparing the success rate of the planned paths for random start and goal pairs.}
%     \label{fig:dubin_success_rate}
% \end{figure}
\noindent We compared CCGP-MP against the RRT planner and the Linear Quadratic Gaussian-Motion Planning (LQG-MP) algorithm for the Linear system in a simulated environment. Fig. \ref{fig:linear_experiments} (center) compares the planner's performance with an added optimality criteria of finding the path with the shortest length, and Fig. \ref{fig:linear_experiments} (right) reports the corresponding path length. From Fig. \ref{fig:linear_experiments} (left) and (center), it is evident that considering uncertainties while planning has a significant impact on the success rate of the trajectories. The variance of the success rate for CCGP-MP reduces with decreased $\delta$. The CCGP-MP and CCGP-MP* planner have an equivalent or lower standard deviation of success-rate than LQG-MP for lower thresholds. The better performance for the CCGP methods is because they ensure all intermediate points on a path satisfy the chance-constraint while LQG-MP does not have such guarantees.

Fig. \ref{fig:linear_experiments} (right) reports the length of paths generated by the different planners for the same set of start and goal pairs. The paths generated by CCGP-MP* are shorter compared to the paths generated by LQG-MP. CCGP-MP* is able to generate optimal paths because the underlying planner of CCGP-MP* uses the RRT* algorithm, which is an asymptotically optimal planner~\cite{rrt_star} that makes no assumptions on the \verb|CONNECT| function. Fig. \ref{fig:Paths}, plots the different plans generated by RRT*, LQG-MP, and CCGP-MP* for a random start and goal point. From the image, we may infer that CCGP-MP* deviates from the RRT* plan where the chance constraints are not met, which results in better accuracy for these paths. In comparison, the paths from RRT*, although shorter, would result in multiple failures during execution because of the noisy robot motion. The paths from LQG-MP, on the other hand, although safer, are not optimal.

\begin{figure*}[t]
    \centering
    \begin{subfigure}[c]{1\columnwidth}
        \centering
        \includegraphics[width=0.7\textwidth, angle=90]{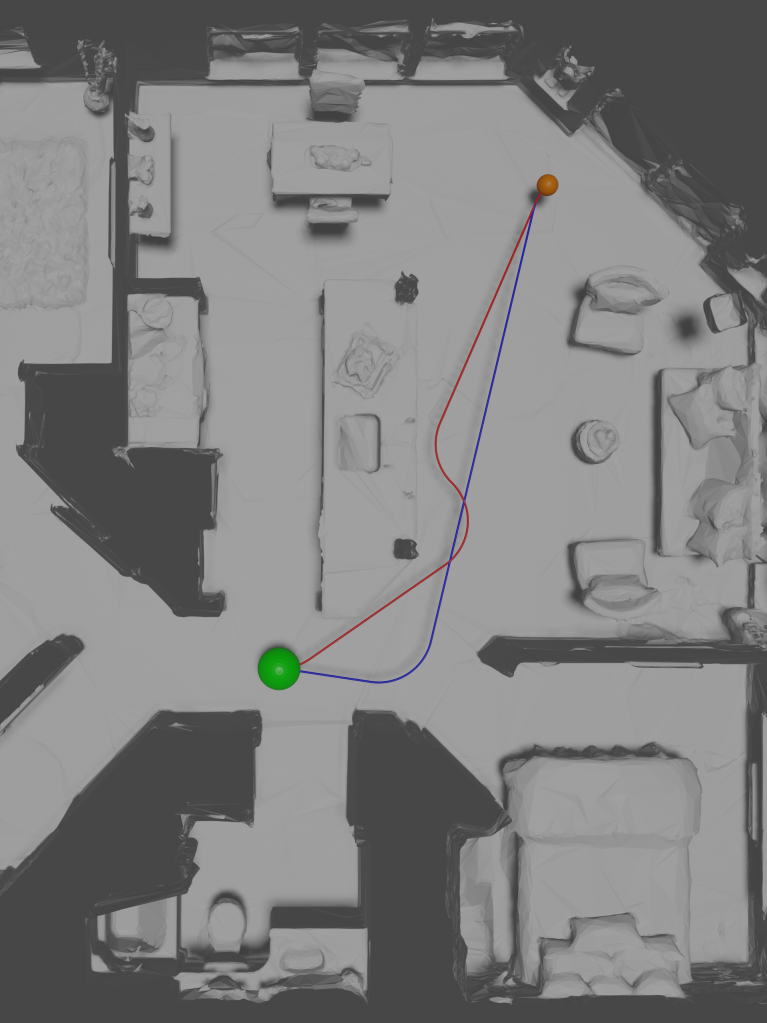}
    \end{subfigure}
    \hfill
    \begin{subfigure}[c]{\columnwidth}
        \centering
        \includegraphics[width=\textwidth]{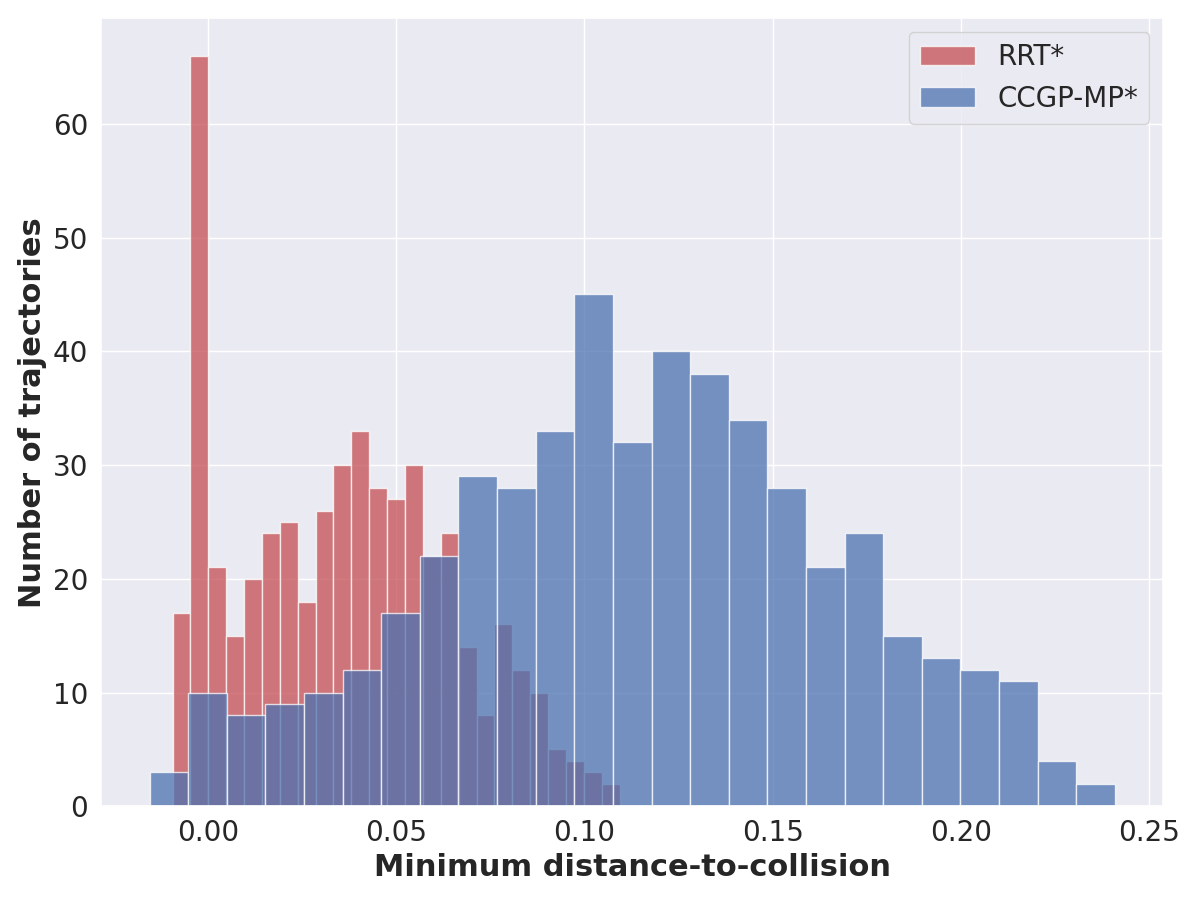}
    \end{subfigure}
    \caption{Left: The trajectories generated by RRT* (red) and CCGP-MP* (5\%) (blue) for a start (orange sphere) and goal (green sphere) pair in a real-world environment. Right: The histogram comparing the minimum distance-to-collision distribution for 500 trajectories for the adjacent path. As formulated, less than 5\% of the paths collide with an obstacle for CCGP-MP*.}
    \label{fig:gibson_test}
    \vspace{-0.5em}
\end{figure*}

% \begin{figure}[t]
%     \centering
%     \begin{subfigure}[c]{1\columnwidth}
%         \centering
%         \includegraphics[width=0.5\textwidth, angle=90]{}
%     \end{subfigure}
%     \hfill
%     \begin{subfigure}[c]{\columnwidth}
%         \centering
%         \includegraphics[width=0.75\textwidth]{figures/gibson_RRT_vs_CCGP_distribution.png}
%     \end{subfigure}
%     \caption{The figure on the top compares the trajectories generated by RRT* (red) and CCGP-MP* (5\%) (blue) for a start (orange) and goal (green) pair in a real-world environment. The histogram on the bottom compares the minimum distance-to-collision distribution for 500 trajectories for the adjacent path. As formulated, less than 5\% of the paths collide with an obstacle for CCGP-MP*.}
%     \label{fig:gibson_test}
%     \vspace{-2em}
% \end{figure}

For the Dubins Car model, we compared CCGP-MP* with RRT*. We did not consider the LQG-MP algorithm for this experiment since it does not explicitly solve the shortest path problem. As expected, CCGP-MP* has a much lower standard deviation of the success rate than RRT* (See Fig. \ref{fig:dubins_ccgp_comb} (right)). Fig. \ref{fig:dubins_ccgp_comb} reveals the reason for this improvement. The path planned by CCGP-MP* avoids the obstacles, even with the noisy robot model, while for the RRT* plan, the robot hits the obstacle. In Fig. \ref{fig:Paths}, we compare the paths generated by CCGP-MP* and RRT*, and like the Linear model, the CCGP-MP* deviates from the RRT* plan when chance-constraints are not met.

The results of the study on environment density on planning time and accuracy are summarized in Table \ref{tab:ccgp_mpTime}. The start and goal pairs were sampled from independent normal distributions with fixed means and 0.5 standard deviations. This distribution was fixed for all the environments. Apart from planning time, we also recorded the time taken by SHGO to find the global minimum for 50 random path segments. The GP model used for each environment had an equal number of support points, resulting in almost similar edge evaluation times. The overall planning time increases with the number of obstacles in space since the planner has to search more to find a feasible solution. We also observed that path accuracy was inversely proportional to the number of objects. The drop in accuracy could be attributed to the fact that more path segments are closer to the set threshold with a denser environment, thus decreasing the overall accuracy of the path.

\begin{table*}[h]   
    \caption{Study of number of obstacles on planning performance}
    \label{tab:ccgp_mpTime}
    \begin{center}
    \begin{tabular}{lcccccc}
        \toprule
        Number of Obstacles & 10 & 14 & 18 & 22 & 26 & 30  \\
        \midrule
        Edge Evaluation Time (sec) & 0.397 $\pm$ 0.058 & 0.394 $\pm$ 0.043 & 0.393 $\pm$ 0.061 & 0.385 $\pm$ 0.058 & 0.401 $\pm$ 0.061 & 0.406 $\pm$ 0.052 \\
        Planning Time (min) & 2.38$\pm$1.27 & 1.87 $\pm$ 1.13 & 2.39 $\pm$ 1.18 & 3.28 $\pm$ 2.07 & 4.70 $\pm$ 2.34 & 4.36 $\pm$ 2.59  \\
        Median Accuracy (\%) & 81.0 & 74.5 & 72.0 & 43.0 & 59.5 & 67.0\\
        \bottomrule
    \end{tabular}
    \end{center}
    \vspace{-1em}
\end{table*}
 
In addition to the simulated environment, we tested our planner on an indoor environment from the Gibson suite~\cite{xiazamirhe2018gibsonenv}. Fig. \ref{fig:gibson_test} (left) shows the trajectory generated by the RRT* and CCGP-MP* (5\%) for a single start and goal pair. Fig. \ref{fig:gibson_test} (right) shows the minimum distance to collision for each trajectory evaluated across 500 runs. For RRT, 16.4\% of the trajectories have the distance-to-collision less than zero, while for CCGP-MP, only 2.4\% of the trajectories have distance-to-collision less than zero. The value for CCGP-MP* also satisfies the delta threshold set for the planner, which is 0.05.

\section{CONCLUSION}
\noindent In this work, we introduced the Chance Constrained Gaussian Process Motion Planning, a chance-constrained motion planning approach that uses modeled distance-to-collision functions to plan in unstructured environments. Through the modeled distribution function, the planner guarantees that all states along the trajectory satisfy the given chance constraints. Simulation results on two robot systems showed that CCGP-MP and CCGP-MP* were able to generate paths that improved the planned path's success rate. 

% some discussion
One of the few limitations of our work lies in approximating distance-to-collision distribution as a Gaussian distribution. This simplification may not apply to some robotic systems. Another limitation centers around the scaling up of planning space. For larger maps, we require more support points to model our GP. For a large number of points($>$10000), the kernel matrix requires a considerable amount of memory~\cite[Chapter 8]{10.5555/1162254}, and the linear equations that need solving for inference becomes computationally intensive. One way to overcome these limitations is to use sparse GP models. One could even construct local GP models for larger maps similar to~\cite{9000569}.

There are multiple directions to be investigated further for the current work. One of them is extending the models to high DoF robotic systems. Rather than using the RBF kernel, the forward kernel (FK)~\cite{das2019forward}  would capture the distance function better. Another interesting avenue to investigate would be using a heteroscedastic GP to model the distance function, which might be more appropriate for time-varying robotic systems.  
% \addtolength{\textheight}{-12cm}   % This command serves to balance the column lengths
                                  % on the last page of the document manually. It shortens
                                  % the textheight of the last page by a suitable amount.
                                  % This command does not take effect until the next page
                                  % so it should come on the page before the last. Make
                                  % sure that you do not shorten the textheight too much.

%%%%%%%%%%%%%%%%%%%%%%%%%%%%%%%%%%%%%%%%%%%%%%%%%%%%%%%%%%%%%%%%%%%%%%%%%%%%%%%%

%%%%%%%%%%%%%%%%%%%%%%%%%%%%%%%%%%%%%%%%%%%%%%%%%%%%%%%%%%%%%%%%%%%%%%%%%%%%%%%%

%%%%%%%%%%%%%%%%%%%%%%%%%%%%%%%%%%%%%%%%%%%%%%%%%%%%%%%%%%%%%%%%%%%%%%%%%%%%%%%%
\section*{APPENDIX}
\noindent For $x\in[0,1]$, the Taylor series expansion of $\frac{1}{(1-ax)^{\frac12}}$ about 0 is given by:
\begin{equation}
    \frac{1}{(1-ax)^{\frac12}} = 1 + \sum_{m=1}^{\infty}\frac{(ax)^m}{m!}\prod_{j=0}^{m-1} (\frac12+j)
    \label{eqn:Taylor_series}
\end{equation}
% Differentiating Equation (\ref{eqn:Taylor_series}) with respect to $x$ we get:
% \begin{IEEEeqnarray}{rl}
%         \frac{d}{dx}\left(\frac{1}{(1-ax)^{\frac12}}\right) &= \frac{a}{2(1-ax)^{\frac23}} \nonumber\\
%         &= \sum_{m=1}^{\infty}\frac{ma^mx^{m-1}}{m!}\prod_{j=0}^{m-1} (\frac12+j)
%     \label{eqn:diff_Taylor_series}
% \end{IEEEeqnarray}
% Substituting for $x=1$ in Equation (\ref{eqn:Taylor_series}) and (\ref{eqn:diff_Taylor_series}) we get:
Using simple calculus we can show that:
\begin{align}
    1 + \sum_{m=1}^{\infty}\frac{a^m}{m!}\prod_{j=0}^{m-1} (\frac12+j) &= \frac{1}{(1-a)^{\frac12}} \label{eqn:series_1}\\
    \sum_{m=1}^{\infty}\frac{ma^m}{m!}\prod_{j=0}^{m-1} (\frac12+j) &= \frac{a}{2(1-a)^{\frac32}} \label{eqn:series_2}
\end{align}

\begin{lemma}
For $\bm{x_1}, \bm{x_2} \in [0, 1]^n$ and a postive definite symmetric matrix $M$ we have ,
\begin{equation}
    |\bm{x_1}^TM\bm{x_1} - \bm{x_2}^TM\bm{x_2}| \leq 2\lambda_{max}\sqrt{n}\|\bm{x_1}-\bm{x_2}\|
    \label{eqn:matrix_norm}
\end{equation}
where $\lambda_{max}$ is the maximum eigenvalue of $M$. 
\label{lemma:norm_ineq}
\end{lemma}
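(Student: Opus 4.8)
The plan is to reduce the difference of quadratic forms to a single bilinear expression via a factorization that exploits the symmetry of $M$, and then to control that expression with the Cauchy--Schwarz inequality together with the box constraint on the inputs. The whole estimate hinges on one structural identity; the remaining steps are routine norm bounds.

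First I would establish the algebraic identity
\begin{equation}
    \bm{x_1}^T M \bm{x_1} - \bm{x_2}^T M \bm{x_2} = (\bm{x_1}-\bm{x_2})^T M (\bm{x_1}+\bm{x_2}). \nonumber
\end{equation}
This follows by expanding the right-hand side into $\bm{x_1}^T M \bm{x_1} + \bm{x_1}^T M \bm{x_2} - \bm{x_2}^T M \bm{x_1} - \bm{x_2}^T M \bm{x_2}$ and noting that the two cross terms cancel because $M$ is symmetric, so that $\bm{x_1}^T M \bm{x_2} = \bm{x_2}^T M \bm{x_1}$. This is the only genuinely structural step, and it is exactly where the symmetry hypothesis (already guaranteed for $M=(\sigma^2 I + K)^{-1}$ as used in Lemma~\ref{lemma:mean_var}) earns its keep.

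Next I would take absolute values and apply Cauchy--Schwarz to obtain $|(\bm{x_1}-\bm{x_2})^T M (\bm{x_1}+\bm{x_2})| \leq \|\bm{x_1}-\bm{x_2}\|\,\|M(\bm{x_1}+\bm{x_2})\|$. Since $M$ is symmetric positive definite, its spectral (operator) norm coincides with its largest eigenvalue, which gives $\|M(\bm{x_1}+\bm{x_2})\| \leq \lambda_{max}\|\bm{x_1}+\bm{x_2}\|$. Finally I would bound the remaining factor using $\bm{x_1},\bm{x_2}\in[0,1]^n$: every coordinate of $\bm{x_1}+\bm{x_2}$ lies in $[0,2]$, so $\|\bm{x_1}+\bm{x_2}\|^2 \leq 4n$, hence $\|\bm{x_1}+\bm{x_2}\| \leq 2\sqrt{n}$. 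Chaining the three inequalities produces the claimed constant $2\lambda_{max}\sqrt{n}$.

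I do not expect a serious obstacle here, as the argument is elementary once the factorization is in hand. The one point requiring care is that symmetry of $M$ is invoked in two logically distinct places — first to cancel the cross terms in the identity, and second to identify the operator norm with $\lambda_{max}$ — so both should be stated explicitly rather than conflated. The box-constraint bound $\|\bm{x_1}+\bm{x_2}\| \leq 2\sqrt{n}$ is also where the restriction to $[0,1]^n$ is essential; without it the constant could not be made dimension-explicit, which matters because this lemma feeds directly into the $\lambda_{max} n$ factor appearing in Lemma~\ref{lemma:mean_var}.
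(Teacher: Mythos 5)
Your proposal is correct and follows essentially the same route as the paper's proof: the symmetric factorization $\bm{x_1}^TM\bm{x_1} - \bm{x_2}^TM\bm{x_2} = (\bm{x_1}-\bm{x_2})^TM(\bm{x_1}+\bm{x_2})$, the spectral bound via $\lambda_{max}$, and the box-constraint estimate $\|\bm{x_1}+\bm{x_2}\|\leq 2\sqrt{n}$. Your write-up is merely more explicit than the paper's three-line chain, separating the Cauchy--Schwarz step from the operator-norm step, which the paper compresses into a single inequality.
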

\begin{proof}
Since $M$ is symmetric, we can expand (\ref{eqn:matrix_norm}) as
% \begin{IEEEeqnarray*}{rCl}
%     |\bm{x_1}^TM\bm{x_1} - \bm{x_2}^TM\bm{x_2}| &= &|(\bm{x_1}-\bm{x_2})^TM(\bm{x_1}+\bm{x_2})| \\
%                             &\leq& \lambda_{max}\|\bm{x_1}-\bm{x_2}\| \|\bm{x_1}+\bm{x_2}\| \\
%                             & \leq& 2\lambda_{max}\|\bm{x_1}-\bm{x_2}\| \sqrt{n}
%                             % &&\qquad \because \|x_1\|\leq \sqrt{n}
% \end{IEEEeqnarray*}
\begin{align*}
    |\bm{x_1}^TM\bm{x_1} - \bm{x_2}^TM\bm{x_2}| &= |(\bm{x_1}-\bm{x_2})^TM(\bm{x_1}+\bm{x_2})| \qquad \quad\\ %additional quad to move the QED symbol
                            &\leq \lambda_{max}\|\bm{x_1}-\bm{x_2}\| \|\bm{x_1}+\bm{x_2}\| \\
                            &\leq 2\lambda_{max}\|\bm{x_1}-\bm{x_2}\| \sqrt{n}
                            % &&\qquad \because \|x_1\|\leq \sqrt{n}
\end{align*}
% \vspace{-1em}
\end{proof}

\begin{lemma}
For $\bm{x_1}, \bm{x_2}\in [0, 1]^n$ , a symmetric positive definite matrix $M$ and $m\in \mathbb{N}$ we have, 
\begin{equation}
    |(\bm{x_1}^TM\bm{x_1})^m - (\bm{x_2}^TM\bm{x_2})^m| \leq \frac{2}{\sqrt{n}}m (n\lambda_{max})^m \|\bm{x_1}-\bm{x_2}\|
    \label{eqn:matrix_norm_m}
\end{equation}
where $\lambda_{max}$ is the maximum eigen value of $M$.
\label{lemma:matrix_norm_m}
\end{lemma}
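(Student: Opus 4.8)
The plan is to reduce everything to the previous lemma via the standard factorization of a difference of $m$-th powers. Writing $a = \bm{x_1}^T M \bm{x_1}$ and $b = \bm{x_2}^T M \bm{x_2}$, both nonnegative since $M$ is positive definite, I would use the identity $a^m - b^m = (a-b)\sum_{j=0}^{m-1} a^j b^{m-1-j}$, so that $|(\bm{x_1}^TM\bm{x_1})^m - (\bm{x_2}^TM\bm{x_2})^m| = |a-b|\,\bigl|\sum_{j=0}^{m-1} a^j b^{m-1-j}\bigr|$. The factor $|a-b|$ is exactly the quantity bounded in Lemma~\ref{lemma:norm_ineq}, which supplies $|a-b| \le 2\lambda_{max}\sqrt{n}\,\|\bm{x_1}-\bm{x_2}\|$ directly.

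It then remains only to bound the sum. First I would establish a uniform upper bound on $a$ and $b$: for any $\bm{x}\in[0,1]^n$ the Rayleigh-quotient inequality gives $\bm{x}^T M \bm{x} \le \lambda_{max}\|\bm{x}\|^2$, and since each coordinate lies in $[0,1]$ we have $\|\bm{x}\|^2 = \sum_k x_k^2 \le n$; hence $0 \le a,b \le n\lambda_{max}$. Consequently each of the $m$ summands satisfies $a^j b^{m-1-j} \le (n\lambda_{max})^{m-1}$, so $\sum_{j=0}^{m-1} a^j b^{m-1-j} \le m\,(n\lambda_{max})^{m-1}$.

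Combining the two bounds yields $|a^m-b^m| \le 2\lambda_{max}\sqrt{n}\,m\,(n\lambda_{max})^{m-1}\,\|\bm{x_1}-\bm{x_2}\|$, and collecting the powers of $n$ and $\lambda_{max}$ (the factor $\lambda_{max}\sqrt{n}$ times $(n\lambda_{max})^{m-1}$ equals $\tfrac{1}{\sqrt{n}}(n\lambda_{max})^m$) gives precisely the claimed constant $\frac{2}{\sqrt{n}}\,m\,(n\lambda_{max})^m$. I expect no serious obstacle here; the only points requiring care are keeping the bookkeeping of the exponents of $n$ and $\lambda_{max}$ consistent and confirming that there are exactly $m$ terms each bounded by $(n\lambda_{max})^{m-1}$, which is why the algebraic factorization (rather than, say, a crude telescoping or inductive argument) is the cleanest route and why the previous lemma slots in so naturally as the $m=1$ base estimate.
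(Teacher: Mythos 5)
Your proposal is correct and follows essentially the same route as the paper's own proof: the difference-of-powers factorization, the bound $\bm{x}^T M \bm{x} \le \lambda_{max} n$ on $[0,1]^n$ to control each of the $m$ summands by $(\lambda_{max} n)^{m-1}$, and Lemma~\ref{lemma:norm_ineq} applied to the factor $|a-b|$, with the same final exponent bookkeeping. Your write-up is merely a bit more explicit about the Rayleigh-quotient step, which the paper states without elaboration.
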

\begingroup
\allowdisplaybreaks
\begin{proof}
The polynomial equation $x^m -y^m$ can be expressed as follows:
% \begin{IEEEeqnarray*}{rCl}
%     x^m-y^m &=(x-y)&(x^{m-1}+x^{m-2}y + \ldots\\
%     &&\qquad +xy^{m-2} + y^{m-1}) 
% \end{IEEEeqnarray*}
\begin{equation}
    x^m-y^m =(x-y)(x^{m-1}+x^{m-2}y + \ldots + y^{m-1}) 
\end{equation}
Let $\bm{x_i}^TM\bm{x_i}=\|\bm{x_i}\|_M^2$ for $i\in\{0,1\}$ we can express (\ref{eqn:matrix_norm_m}) in the same fashion as above,
\begin{IEEEeqnarray}{rl}
    |\|\bm{x_1}\|_M^{2^m} - \|\bm{x_2}&\|_M^{2^m}| = |(\|\bm{x_1}\|_M^2 - \|\bm{x_2}\|_M^2)(\|\bm{x_1}\|_M^{2^{(m-1)}}  \label{eqn:complex_exp} \\ &+\|\bm{x_1}\|_M^{2^{(m-2)}}\|\bm{x_2}\|_M^2+ \ldots +\|\bm{x_2}\|_M^{2^{(m-1)}})\| \nonumber
\end{IEEEeqnarray}
Since $\bm{x_i} \in[0,1]^n$ we can write $\|\bm{x_i}\|_M^2\leq \lambda_{max}n$ for $i\in\{0,1\}$, where $\lambda_{max}$ is the largest eigen value of $M$. Using this bound we can simplify (\ref{eqn:complex_exp}) as follows:
\begin{align*}
    (\ref{eqn:complex_exp})
                                    &\leq |\|\bm{x_1}\|_M^2 - \|\bm{x_2}\|_M^2| m (\lambda_{max}n)^{m-1} \qquad \qquad \qquad \quad \ \\ %additional quad to move the QED symbol
                                    &\leq 2\sqrt{n}\lambda_{max}\|\bm{x_1}-\bm{x_2}\| m (\lambda_{max}n)^{m-1} \\
                                    \tag{from Lemma \ref{lemma:norm_ineq}} \\
                                    & \leq \frac{2}{\sqrt{n}} m (\lambda_{max}n)^m\|\bm{x_1}-\bm{x_2}\|
\end{align*}
\end{proof}
\endgroup
\begin{lemma}
For $\bm{x_1}, \bm{x_2}\in[0, 1]^n$, a symmetric positive definite matrix $M$ and $m\in\mathbb{N}$ we have:
\begin{IEEEeqnarray}{rl}
    \|(\bm{x_1}^TM\bm{x_1})^m\bm{x_1} - &(\bm{x_2}^TM\bm{x_2})^m\bm{x_2}\| \leq     \label{eqn:norm_over_vec}
\\ &(1+2m)(\lambda_{max}n)^m\|\bm{x_1}-\bm{x_2}\|\nonumber
\end{IEEEeqnarray}
where $\lambda_{max}$ is the largest eigenvalue of $M$.

\begin{proof}
We can simplify (\ref{eqn:norm_over_vec}) using Lemma \ref{lemma:matrix_norm_m}.
\begin{align*}
    &\|\ \|\bm{x_1}\|_M^{2m}\bm{x_1} - \|\bm{x_2}\|_M^{2m}\bm{x_2}\ \| =\\ 
    &\qquad \|\ \|\bm{x_1}\|_M^{2m}(\bm{x_1}-\bm{x_2}) +(\|\bm{x_1}\|_M^{2m}-\|\bm{x_2}\|_M^{2m})\bm{x_2}\ \| \\
    &\ \leq \|\ \|\bm{x_1}\|_M^{2m}(\bm{x_1}-\bm{x_2})\ \| + \|\ (\|\bm{x_1}\|_M^{2m}-\|\bm{x_2}\|_M^{2m})\bm{x_2}\ \| \\
    &\  \leq (\lambda_{max}n)^m\|\bm{x_1}-\bm{x_2}\| + \sqrt{n}\frac{2}{\sqrt{n}}m(\lambda_{max}n)^m\|\bm{x_1}-\bm{x_2}\| \\
                                             \tag{from Lemma \ref{lemma:matrix_norm_m}} \\
    & \leq (1+2m)(\lambda_{max}n)^m\|\bm{x_1}-\bm{x_2}\|
\end{align*}
% \vspace{-0.5em}
\end{proof}
\label{lemma:norm_over_vec}
\end{lemma}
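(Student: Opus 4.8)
The statement to prove is the vector-valued inequality
\[
    \|(\bm{x_1}^TM\bm{x_1})^m\bm{x_1} - (\bm{x_2}^TM\bm{x_2})^m\bm{x_2}\| \leq (1+2m)(\lambda_{max}n)^m\|\bm{x_1}-\bm{x_2}\|,
\]
which is the piece needed in Lemma \ref{lemma:mean_var} to bound a single term of the Taylor series. The plan is to reduce this vector estimate to two scalar estimates that are already in hand. Writing $\|\bm{x_i}\|_M^2 = \bm{x_i}^TM\bm{x_i}$ as in the earlier lemmas, I would first perform the standard add-and-subtract decomposition
\[
    \|\bm{x_1}\|_M^{2m}\bm{x_1} - \|\bm{x_2}\|_M^{2m}\bm{x_2}
    = \|\bm{x_1}\|_M^{2m}(\bm{x_1}-\bm{x_2}) + \bigl(\|\bm{x_1}\|_M^{2m} - \|\bm{x_2}\|_M^{2m}\bigr)\bm{x_2},
\]
and then apply the triangle inequality to split the norm into a ``difference-of-vectors'' term and a ``difference-of-scalars'' term.

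For the first term I would bound the scalar coefficient $\|\bm{x_1}\|_M^{2m}$ uniformly. Since $M\succeq 0$ with largest eigenvalue $\lambda_{max}$ and $\bm{x_1}\in[0,1]^n$ forces $\|\bm{x_1}\|^2\le n$, I have $\|\bm{x_1}\|_M^2 = \bm{x_1}^TM\bm{x_1}\le \lambda_{max}\|\bm{x_1}\|^2\le \lambda_{max}n$, so $\|\bm{x_1}\|_M^{2m}\le(\lambda_{max}n)^m$. This yields the contribution $(\lambda_{max}n)^m\|\bm{x_1}-\bm{x_2}\|$, which accounts for the leading ``$1$'' in the factor $(1+2m)$.

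For the second term I would invoke Lemma \ref{lemma:matrix_norm_m}, which is already established and gives exactly the scalar difference bound $|\|\bm{x_1}\|_M^{2m} - \|\bm{x_2}\|_M^{2m}|\le \tfrac{2}{\sqrt{n}}\,m(\lambda_{max}n)^m\|\bm{x_1}-\bm{x_2}\|$. Combining this with the elementary estimate $\|\bm{x_2}\|\le\sqrt{n}$ (again from $\bm{x_2}\in[0,1]^n$) produces $2m(\lambda_{max}n)^m\|\bm{x_1}-\bm{x_2}\|$, supplying the ``$2m$'' part of the coefficient. Adding the two contributions gives $(1+2m)(\lambda_{max}n)^m\|\bm{x_1}-\bm{x_2}\|$, as claimed.

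There is essentially no serious obstacle here, because the only nontrivial ingredient — the per-power Lipschitz estimate on the quadratic form $\bm{x}\mapsto(\bm{x}^TM\bm{x})^m$ — has been isolated into Lemma \ref{lemma:matrix_norm_m}. The only points requiring a little care are (i) choosing the correct splitting point $\|\bm{x_1}\|_M^{2m}\bm{x_2}$ (rather than $\|\bm{x_2}\|_M^{2m}\bm{x_1}$) so that the uniform bound lands on the coefficient multiplying $(\bm{x_1}-\bm{x_2})$, and (ii) tracking the factors of $\sqrt{n}$ so that the $\tfrac{2}{\sqrt{n}}$ from Lemma \ref{lemma:matrix_norm_m} cancels against the $\sqrt{n}$ from $\|\bm{x_2}\|$, leaving a clean $2m$.
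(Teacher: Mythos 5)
Your proposal is correct and follows essentially the same argument as the paper's proof: the identical add-and-subtract decomposition $\|\bm{x_1}\|_M^{2m}(\bm{x_1}-\bm{x_2}) + (\|\bm{x_1}\|_M^{2m}-\|\bm{x_2}\|_M^{2m})\bm{x_2}$, the triangle inequality, the uniform bound $\|\bm{x_1}\|_M^{2m}\leq(\lambda_{max}n)^m$ for the first term, and Lemma \ref{lemma:matrix_norm_m} together with $\|\bm{x_2}\|\leq\sqrt{n}$ for the second. Your bookkeeping of the $\sqrt{n}$ cancellation matches the paper exactly, so there is nothing to amend.
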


% Appendixes should appear before the acknowledgment.
\vspace{-1em}
\section*{ACKNOWLEDGMENT}
We thank Vikas Dhiman, Florian Richter, and Nikhil Das for insightful discussions.
\vspace{-0.5em}
%%%%%%%%%%%%%%%%%%%%%%%%%%%%%%%%%%%%%%%%%%%%%%%%%%%%%%%%%%%%%%%%%%%%%%%%%%%%%%%%

\bibliographystyle{IEEEtran}
\bibliography{main}
% \nocite{*}
\end{document}